\def\eqref#1{equation~\ref{#1}}
\def\1{\bm{1}}
\def\eps{{\epsilon}}
\DeclareMathAlphabet{\mathsfit}{\encodingdefault}{\sfdefault}{m}{sl}
\SetMathAlphabet{\mathsfit}{bold}{\encodingdefault}{\sfdefault}{bx}{n}
\DeclareMathOperator*{\argmin}{arg\,min}
\newcommand{\cL}{{\mathcal{L}}}
\newcommand{\bA}{{\boldsymbol{A}}}
\newcommand{\bB}{{\boldsymbol{B}}}
\newcommand{\bE}{{\boldsymbol{E}}}
\newcommand{\bM}{{\boldsymbol{M}}}
\newcommand{\bN}{{\boldsymbol{N}}}
\newcommand{\bU}{{\boldsymbol{U}}}
\newcommand{\bV}{{\boldsymbol{V}}}
\newcommand{\bX}{{\boldsymbol{X}}}
\newcommand{\bY}{{\boldsymbol{Y}}}
\newcommand{\bZ}{{\boldsymbol{Z}}}
\newcommand{\br}{{\boldsymbol{r}}}
\newcommand{\EE}{\mathbb{E}}
\newtheorem{theorem}{Theorem}[section]
\newtheorem{proposition}[theorem]{Proposition}
\newtheorem{lemma}[theorem]{Lemma}
\newtheorem{assumption}[theorem]{Assumption}
\newtheorem{remark}[theorem]{Remark}
\title{Enhancing Zeroth-order Fine-tuning for Language Models with Low-rank Structures}
\date{} 					
\author[1,*]{Yiming Chen}
\author[2,*]{Yuan Zhang}
\author[1]{Liyuan Cao}
\author[3,\dag]{Kun Yuan}
\author[1]{Zaiwen Wen}
\affil[1]{Beijing International Center for Mathematical Research, Peking University, Beijing, China}
\affil[2]{Center for Data Science, Peking University, Beijing, China}
\affil[3]{Center for Machine Learning Research, Peking University, Beijing, China}
\renewcommand\footnotemark{}  
\begin{document}
\maketitle

\footnotetext[1]{Equal contribution.}
\footnotetext[2]{Corresponding author (email: kunyuan@pku.edu.cn).}
\footnotetext[3]{Our code is available at \url{https://github.com/optsuite/LOZO}.}

\begin{abstract}
	Parameter-efficient fine-tuning (PEFT) significantly reduces memory costs when adapting large language models (LLMs) for downstream applications. However, traditional first-order (FO) fine-tuning algorithms incur substantial memory overhead due to the need to store activation values for back-propagation during gradient computation, particularly in long-context fine-tuning tasks. Zeroth-order (ZO) algorithms offer a promising alternative by approximating gradients using finite differences of function values, thus eliminating the need for activation storage. Nevertheless, existing ZO methods struggle to capture the low-rank gradient structure common in LLM fine-tuning, leading to suboptimal performance. This paper proposes a low-rank ZO gradient estimator and introduces a novel \underline{\textbf{lo}}w-rank \underline{\textbf{ZO}} algorithm (LOZO) that effectively captures this structure in LLMs. We provide convergence guarantees for LOZO by framing it as a subspace optimization method. Additionally, its low-rank nature enables LOZO to integrate with momentum techniques while incurring negligible extra memory costs. Extensive experiments across various model sizes and downstream tasks demonstrate that LOZO and its momentum-based variant outperform existing ZO methods and closely approach the performance of FO algorithms.
\end{abstract}


\section{Introduction}
Large language models (LLMs) have demonstrated exceptional performance across a wide range of domains \citep{solaiman2019release, brown2020language, achiam2023gpt}. To adapt LLMs for specific downstream applications, fine-tuning pre-trained models has become the \emph{de facto} approach \citep{gururangan2020don, sanh2021multitask}. Parameter-efficient fine-tuning (PEFT) methods, such as those proposed by \citep{hu2021lora, lester2021power}, aim to reduce memory consumption by freezing most pre-trained weights and updating only a subset of parameters. However, even with these approaches, first-order (FO) optimization algorithms like stochastic gradient descent (SGD) \citep{amari1993backpropagation} and Adam \citep{kingma2014adam} still encounter substantial memory overhead due to the need to store activation values for back-propagation during gradient computation. This issue becomes even more pronounced in applications involving long contexts, where activations account for the majority of memory usage.

To enhance memory efficiency, a promising alternative is the use of zeroth-order (ZO) algorithms \citep{spall1992multivariate}. Unlike FO methods, ZO algorithms do not require direct gradient computation. Instead, they approximate gradients using finite differences of function values, eliminating the need for back-propagation and the storage of activation values, which leads to substantial memory savings. ZO algorithms have been extensively studied over the past few decades \citep{duchi2015optimal, nesterov2017random, berahas2022theoretical} and were recently applied to fine-tuning LLMs for the first time in \citep{malladi2023fine}, where the authors adapt the classical ZO stochastic gradient descent (ZO-SGD) algorithm \citep{ghadimi2013stochastic} to a memory-efficient variant known as the MeZO algorithm. As demonstrated in \citep{malladi2023fine}, the MeZO algorithm can reduce memory costs to a quarter of those incurred by standard SGD.

However, ZO algorithms still face several challenges when fine-tuning LLMs. A primary concern is the substantial discrepancy in the matrix rank between estimated ZO gradients and true FO gradients. Extensive literature reports that FO gradients generated during backpropagation in LLM fine-tuning exhibit a \textit{low-rank} structure \citep{malladi2023fine,zhao2024galore,hao2024flora}. In contrast, the ZO gradients in MeZO are derived from a perturbation matrix sampled from a Gaussian distribution, which is nearly \textit{full-rank}. This discrepancy can result in a performance gap between ZO and FO algorithms.
Another challenge is constructing momentum algorithms using ZO gradients. Since ZO gradients are typically full-rank, the resulting momentum variable also becomes full-rank, leading to significant overhead in storing the momentum \citep{malladi2023fine,zhang2024revisiting,chen2019zo}. This overhead offsets the memory efficiency gained through ZO algorithms.

It is evident that the above challenges arise from the full-rank structure of ZO gradient estimates. To address these issues, this work presents a novel strategy for the gradient approximation based on finite differences of function values. Specifically, we propose a \underline{\textbf{l}}ow-rank matrix-wise \underline{\textbf{g}}radient \underline{\textbf{e}}stimator (LGE) scheme and a new variant of the ZO-SGD algorithm that capitalizes on this approach. This ensures that our ZO gradient estimator consistently retains a low-rank structure, fluctuating within a low-dimensional subspace across iterations. As a result, our estimator more accurately approximates the low-rank FO gradient employed in LLM fine-tuning, leading to improved empirical performance. Moreover, the low-rank ZO gradient facilitates the use of low-rank momentum variables, significantly reducing memory usage compared to conventional ZO momentum algorithms. Our contributions can be summarized as follows: 
\begin{itemize}
    \item We propose a low-rank ZO gradient estimator. Unlike traditional coordinate-wise or randomized approaches, our approach derives the ZO gradient using a low-rank perturbation matrix, ensuring that the approximated gradient retains a low-rank structure. Our derived ZO gradient closely resembles  the FO gradient structure in LLM fine-tuning. 

    \item We develop two novel low-rank ZO algorithms for LLM fine-tuning: \underline{\textbf{Lo}}w-rank \underline{\textbf{ZO}}-SGD (LOZO) and its \underline{\textbf{m}}omentum-based variant LOZO-M. A critical component in these algorithms is the
    \textit{lazy sampling strategy}, where the low-rank random perturbation matrix is sampled over multiple steps, rather than at each iteration. This allows the ZO algorithm to sufficiently explore a low-rank subspace over a longer period, preventing per-iteration abrupt changes to model parameters and enhancing fine-tuning performance. Moreover, LOZO-M incurs negligible additional memory overhead for storing momentum.

    \item We establish convergence guarantees for LOZO under {common} assumptions in stochastic ZO optimization. A key insight from our convergence analysis is that LOZO can be viewed as a subspace optimization method employing a {standard} ZO gradient estimator. This method iteratively solve the fine-tuning problem by alternating between different low-rank subspaces to progressively improve the overall solution.

    \item We conduct extensive experiments across various model scales (ranging from 350M to 66B) and downstream tasks, including classification, multiple-choice, and generation. LOZO and LOZO-M outperform zero-shot, ICL, MeZO, and MeZO-LoRA in most tasks, while maintaining the same storage overhead as MeZO. 
\end{itemize}

\subsection{Related Work}
\textbf{Zeroth-order optimization.} Zeroth-order (ZO) optimization typically employs finite difference of function values to approximate gradients. Since it does not require  gradient computation, ZO methods have been widely applied in various machine learning (ML) domains, including adversarial attack and defense \citep{ilyas2018black, zhao2019design, tu2019autozoom, zhang2022robustify}, model-agnostic contrastive explanations \citep{dhurandhar2019model}, and AutoML \citep{wang2022zarts}. ZO algorithms have been derived from FO methods in numerous studies, such as ZO-SGD \citep{ghadimi2013stochastic, liu2019signsgd}, ZO-Adam \citep{chen2019zo}, and ZO-SVRG \citep{liu2018zeroth, ji2019improved}, among others \citep{lian2016comprehensive, liu2020primer}. Although straightforward, these adaptations often exhibit high variance and slow convergence because of the dimensionality of the model. To address this, techniques such as sparse gradient exploitation \citep{balasubramanian2018zeroth, cai2022zeroth} and feature reuse in deep neural networks \citep{chen2023deepzero} have been proposed, highlighting the potential of ZO optimization in large-scale ML problems. 

\textbf{Memory-efficient fine-tuning}. A range of memory-efficient methods have been proposed to enhance the accessibility of LLM fine-tuning. For instance, LoRA \citep{hu2021lora} introduces low-rank perturbations to pre-trained model weights, utilizing only a few trainable parameters while achieving performance comparable to full fine-tuning. Other approaches \citep{zhao2024galore, hao2024flora, muhamed2024grass} compress gradients by projecting them into subspaces, thereby reducing the memory required for storing optimizer states.
In contrast to first-order (FO) algorithms, zeroth-order (ZO) algorithms \citep{malladi2023fine} have gained considerable attention for their efficiency, as they avoid storing activation values. To accelerate ZO fine-tuning, \citet{gautam2024variance} introduced ZO-SVRG for LLM fine-tuning, while \citet{zhao2024second} employed ZO methods to approximate a natural gradient algorithm. Although these approaches improve convergence, they often incur increased memory costs.
To address this trade-off, \citet{liu2024sparse} proposed incorporating a sparse mask into ZO iterations to reduce dimensionality and expedite fine-tuning, albeit at the cost of some accuracy. \citet{liaddax} explored a hybrid approach, combining FO gradients with ZO estimators in each iteration. Complementing these efforts, \citet{zhang2024revisiting} provide a comprehensive benchmark of various ZO-based algorithms for LLM fine-tuning.

\section{Preliminaries}

This section provides an overview of ZO optimization and the commonly used ZO gradient estimators. We also introduce the MeZO algorithm \citep{malladi2023fine} used in LLM fine-tuning. 

\subsection{Zeroth-Order (ZO) Optimization}

We consider the following optimization problem:
\begin{equation}\label{prob-opt}
    \min_{\bX} f(\bX) := \mathbb{E}_{\xi} [F(\bX; \xi)],
\end{equation}
where \( \bX \) represents the set of trainable parameters with dimension \(d\). For example, in the LLM fine-tuning process, we can express \(\bX = \{X_\ell\}_{\ell=1}^\cL\), where \(X_\ell \in \mathbb{R}^{m_\ell \times n_\ell}\) denotes the \(\ell\)-th weight matrix and $\cL$ is the number of layers. The function \( F(\bX; \xi) \) is the loss function that depends on a random variable \( \xi \).

The ZO method estimates gradients using only function evaluations, without requiring direct access to gradient information. Two commonly employed gradient estimation schemes are the deterministic \underline{\textbf{C}}oordinate-wise \underline{\textbf{G}}radient \underline{\textbf{E}}stimation (CGE) \citep{lian2016comprehensive, chen2023deepzero} and the \underline{\textbf{R}}andomized vector-wise \underline{\textbf{G}}radient \underline{\textbf{E}}stimation (RGE) \citep{spall1992multivariate, duchi2015optimal, nesterov2017random}. These are formally defined as:
\begin{align}
\text{(CGE)}\quad \hat{\nabla} F(\bX; \xi) &:= \sum\limits_{i=1}^d\frac{F(\bX + \epsilon \bE_i; \xi) - F(\bX - \epsilon \bE_i; \xi)}{2\epsilon} \bE_i, \label{eq:CGE} \\
\text{(RGE)}\quad \hat{\nabla} F(\bX; \xi) &:= \frac{F(\bX + \epsilon \bZ; \xi) - F(\bX - \epsilon \bZ; \xi)}{2\epsilon} \bZ.  \label{eq:RGE}
\end{align}
The scalar \(\epsilon\) denotes the perturbation magnitude, which influences the accuracy of the gradient approximation. Both $\bE_i$ and $\bZ$ are of the same dimensions as \(\bX\). The quantity $\bE_i$ is a basis vector/matrix with its $i$-th element set to one and all other elements set to zero, whereas the elements of $\bZ$ are randomly generated, typically sampled independently from a standard normal distribution. An extension of the RGE method is the \(q\)-RGE approach. Here, the RGE is computed \(q\) times independently, and the final gradient estimate is obtained by averaging these estimations. 

Utilizing these gradient estimators, the ZO-SGD method is implemented through the following iterative scheme:
\begin{equation}\label{eqa-zo-sgd}
    \bX^{t+1} = \bX^t - \alpha \hat{\nabla} F(\bX^t; \xi^t),
\end{equation}
where \( \alpha \) denotes the step size, also termed the learning rate, and \( \hat{\nabla} F \) represents the gradient estimated with ZO information.

\subsection{Memory-efficient ZO-SGD (MeZO)}

{The standard implementation of ZO-SGD incurs substantial memory costs. For example, when constructing the gradient estimator using the RGE scheme, the traditional ZO-SGD method requires memory to store the perturbation matrix \(\bZ\). To mitigate this memory overhead, the MeZO method \citep{malladi2023fine} was introduced as a memory-efficient variant of ZO-SGD. Unlike the standard approach, MeZO avoids storing the entire perturbation matrix \(\bZ\). Instead, the algorithm performs both the perturbation and ZO-SGD updates in place and employs a technique of saving the random seed used to generate \(\bZ\), allowing it to be regenerated when necessary. While this introduces additional computational costs, it significantly reduces memory usage.} 

\section{Low-rank Zeroth-order Gradient Estimator}\label{sec-lge}
\textbf{LLM gradients exhibit low-rank structures.} The low-rank structure of neural networks has been extensively investigated in previous literature \citep{li2018measuring, larsen2021many}. These studies suggest that loss landscapes exist within an \textit{intrinsic dimension}, implying that model weights can be optimized within a low-rank subspace. Furthermore, additional researches \citep{sagun2017empirical, gur2018gradient} have demonstrated that stochastic gradients dynamically converge to a remarkably small subspace, especially when fine-tuning LLMs \citep{zhang2023fine}. Recent work \citep{zhao2024galore} also provides theoretical evidence that the gradient matrix becomes low-rank during LLM training and fine-tuning. Figure \ref{lowrank-gradient} investigates the rank of the gradient matrix during LLM fine-tuning. {The left plot demonstrates that the gradient matrix remains low-rank across training steps, while the right plot shows that this low-rank property persists across different layers.}

\begin{figure}[t!]
    \centering
    \includegraphics[width=0.4\linewidth]{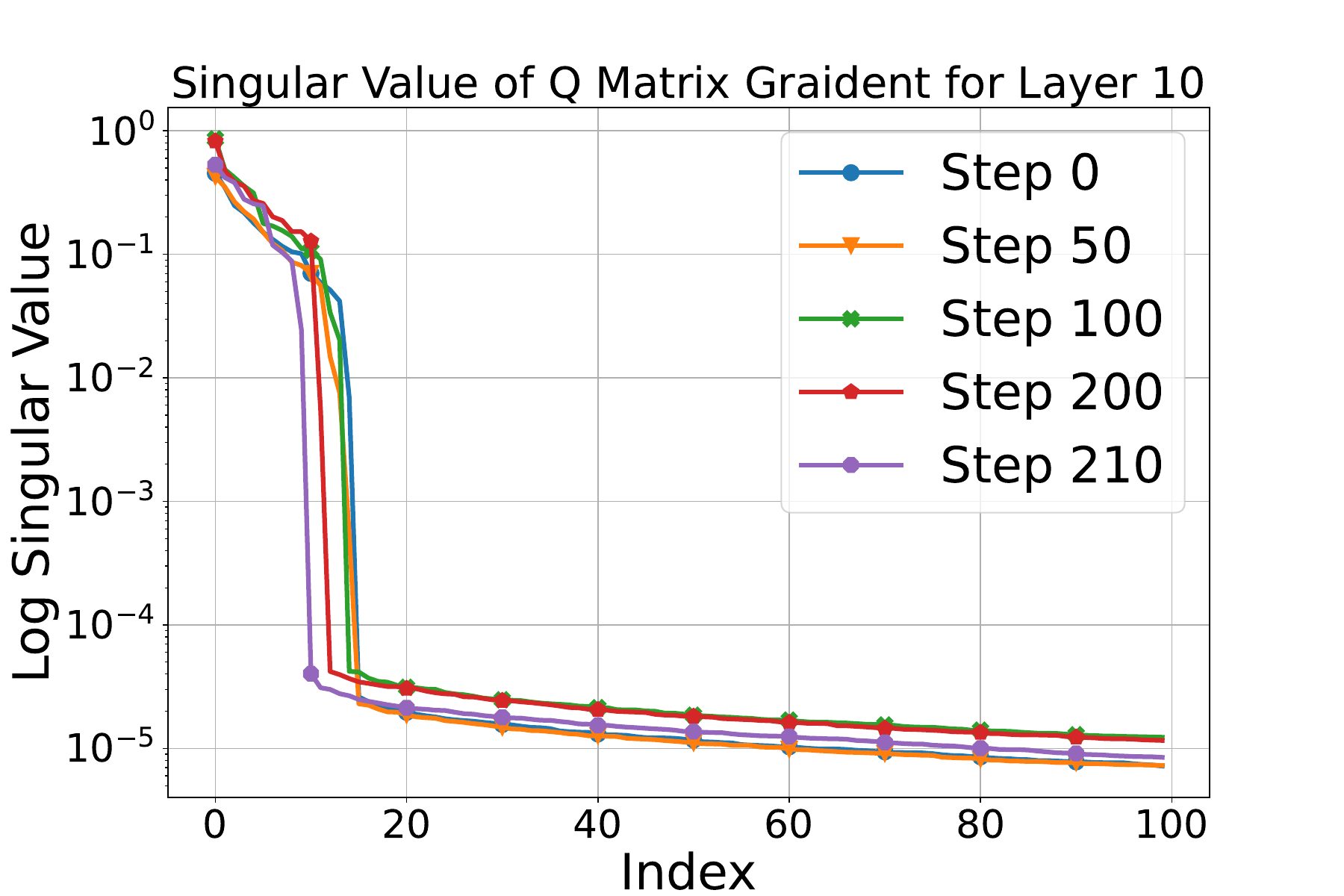}
    \hspace{1cm}
    \includegraphics[width=0.4\linewidth]{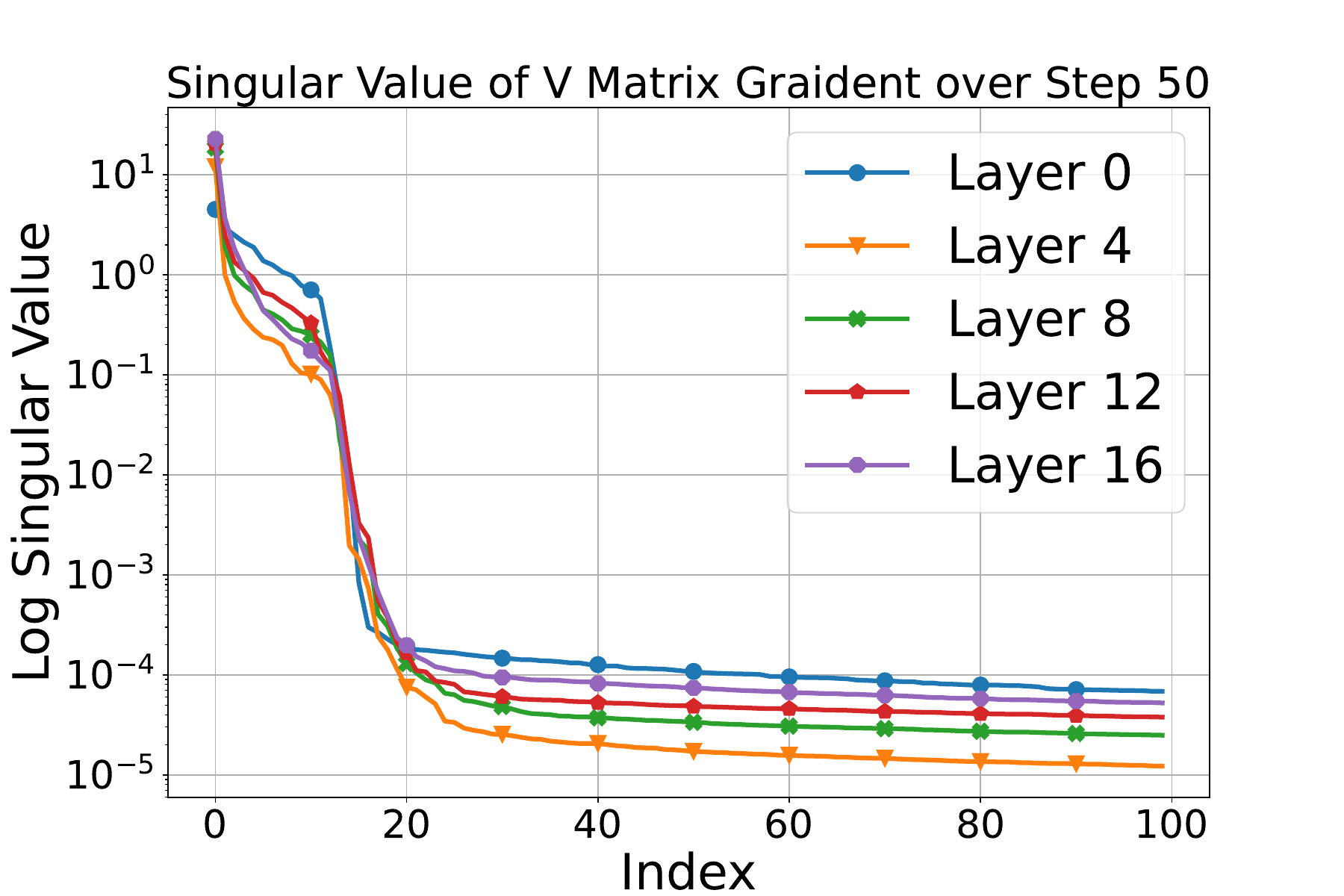}
    \caption{\small The low-rank structure of the gradients encountered in the fine-tuning of LLMs, demoenstrated using the OPT-1.3B model with the COPA dataset, where the gradient matrices have dimensions of $2048 \times 2048$. For both two figures, we report only the 100 largest singular values. \textit{Left}: Singular value distribution of the gradient of the attention $Q$ matrix in layer 10 across different training steps. \textit{Right}: Singular value distribution of the gradient of the attention $V$ matrix across different layers at training step 50.}
    \label{lowrank-gradient}
\end{figure}

\textbf{Existing ZO methods are ineffective at capturing low-rank structures.} Commonly used ZO gradient estimators like CGE and RGE cannot capture {the low-rank structure of the true gradients} during LLM fine-tuning. The RGE scheme produces an estimated gradient that is essentially a projection of the true gradient onto a standard normal random matrix \(\bZ\), which is almost always full rank. On the other hand, although the CGE scheme can approximate the full true gradient, it requires evaluating the loss function \(d\) times per iteration, making it impractical for large-scale problems. As a result, existing ZO algorithms \citep{ghadimi2013stochastic,liu2019signsgd,chen2019zo,liu2018zeroth,ji2019improved,balasubramanian2018zeroth,cai2022zeroth,chen2023deepzero,malladi2023fine,zhang2024revisiting}, which rely on either CGE or RGE, fail to effectively account for the low-rank structure inherent in the gradient matrix during LLM fine-tuning.

\textbf{Low-rank ZO gradient estimator (LGE).}
To bridge this gap, we propose a matrix-wise ZO gradient estimator, LGE, that preserves the low-rank structure in gradients. In LLM fine-tuning, let \(\bX = \{X_\ell\}_{\ell=1}^\cL\) represent the model's weights, where \(X_\ell \in \mathbb{R}^{m_\ell \times n_\ell}\) is the weight matrix of the \(\ell\)-th layer. 
We sample two matrices, \(U_\ell \in \mathbb{R}^{m_\ell \times r_\ell}\) and \(V_\ell \in \mathbb{R}^{n_\ell \times r_\ell}\), from standard normal distributions, where \(r_\ell \ll \min\{m_\ell, n_\ell\}\). The LGE for partial gradient of the \( \ell \)-th weight matrix is defined as
\begin{equation}\label{eqa-grad-approx-part-lowrank}
   \hat{\nabla}_{X_\ell} F(\bX; \xi) := \frac{F(\{X_\ell + \eps U_\ell V_\ell^T\}_{\ell=1}^\cL; \xi) - F(\{X_\ell - \eps U_\ell V_\ell^T\}_{\ell=1}^\cL; \xi)}{2\epsilon} (U_\ell V_\ell^T/r_\ell).
\end{equation}

The scaling factor \( 1/r_\ell \) is introduced to ensure that LGE is an unbiased estimator of the true gradient as \( \epsilon \to 0 \) (see Proposition \ref{prop:unbiased}).  Defining \(\bU := \{U_\ell\}_{\ell=1}^\cL\), \(\bV := \{V_\ell\}_{\ell=1}^\cL\), $\br := \{r_\ell\}_{\ell=1}^\cL$, and $\hat{\nabla} F(\bX; \xi) := \{\hat{\nabla}_{X_\ell} F(\bX; \xi)\}_{\ell=1}^\cL$, we express \(\bX \pm \epsilon \bU \bV^T := \{X_\ell \pm \epsilon U_\ell V_\ell^T\}_{\ell=1}^\cL\) and $\bU \bV^T/\br := \{U_\ell V_\ell^T/r_\ell\}_{\ell=1}^\cL$. Using these notations,  LGE can be written into a more compact form:
\begin{align}\label{LGE-compact}
\text{(LGE)}\quad\quad \hat{\nabla} F(\bX; \xi) &:= \frac{F(\bX + \epsilon \bU \bV^T; \xi) - F(\bX - \epsilon \bU \bV^T; \xi)}{2\epsilon} (\bU \bV^T/\br).
\end{align}
Using the definition in (\ref{eqa-grad-approx-part-lowrank}), we observe that the gradient matrix \(\hat{\nabla}_{X_\ell} F(\bX; \xi)\) has a rank of at most \(r_\ell\), effectively capturing the low-rank structure of the FO gradient in LLM fine-tuning.

\section{Low-rank Zeroth-order SGD}
This section introduces LOZO, a novel low-rank ZO algorithm for LLM fine-tuning. LOZO can be interpreted as a ZO subspace optimization method that leverages a standard gradient estimator. Based on this key insight, we establish convergence guarantees for LOZO.

\subsection{Algorithm Development}
\textbf{Vanilla recursion.} Following the LGE definition (\ref{LGE-compact}), we introduce the  LGE operator
\begin{align}
    \text{LGE}(\bX,\bU, \bV, \br,\epsilon, \xi) := \frac{F(\bX + \epsilon \bU \bV^T; \xi) - F(\bX - \epsilon \bU \bV^T; \xi)}{2\epsilon} (\bU \bV^T/\br).
\end{align}
To solve problem (\ref{prob-opt}), the vanilla recursion with the LGE scheme is as follows. For any $t \ge 0$,
\begin{align}\label{Lozo-main}
\bX^{t+1} = \bX^t - \alpha \hat{\nabla} F(\bX^t; \xi^t) \quad \mbox{where} \quad \hat{\nabla} F(\bX^t; \xi^t) = \text{LGE}(\bX^t,\bU^t, \bV^t, \br,\epsilon, \xi^t).
\end{align}
In practice, we only need to store \( U_\ell \) and \( V_\ell \) for each layer \( \ell \), and we apply the perturbation (\ref{LGE-compact}) and the update (\ref{Lozo-main}) layer by layer, eliminating the need to retain the full gradient estimator \( U_\ell V_\ell^T \). Since \( r_\ell \ll \min\{m_\ell, n_\ell\} \), the additional memory required for storing \( U_\ell \) and \( V_\ell \) is negligible. Moreover, memory costs can be further reduced using the random seed technique \citep{malladi2023fine}. Instead of storing \( U_\ell \) and \( V_\ell \) directly, only the random seeds \( s_\ell^U \) and \( s_\ell^V \) used to generate them are saved. Whenever \( U_\ell \) and \( V_\ell \) are needed, the seeds \( s_\ell^U \) and \( s_\ell^V \) are used to regenerate these matrices, thereby eliminating the need for their storage. While this approach reduces memory usage, it introduces additional floating-point operations (flops) due to the regeneration process.

\begin{algorithm}[!htbp]
\caption{Low-rank ZO-SGD (LOZO)}
\label{alg-lowrank-zo}
\KwIn{parameters $\bX$, loss function $F(\bX;\xi)$, step budget $T$, perturbation scale $\epsilon$, learning rate $\alpha$, sample interval $\nu$ and rank $\{r_\ell\}$.}

\For{$t = 0, \ldots, T-1$}{
    \ForEach{$X_\ell \in \bX$}{
        Sample $U_\ell \in \mathbb{R}^{m_\ell \times r_\ell}$ from the standard normal distribution \;
        \If{$t \bmod \nu = 0$}{
            Sample $V_\ell \in \mathbb{R}^{n_\ell \times r_\ell}$ from the standard normal distribution \tcp*{Resample $V_\ell$} 
        }
    }
    $\bX \gets \text{Perturbation}(\bX, \epsilon, \{U_\ell, V_\ell\})$ \;
    $F_+ \gets F(\bX; \xi)$ \;
    $\bX \gets \text{Perturbation}(\bX, -2\epsilon, \{U_\ell, V_\ell\})$ \;
    $F_- \gets F(\bX; \xi)$ \;
    $\bX \gets \text{Perturbation}(\bX, \epsilon, \{U_\ell, V_\ell\})$\tcp*{Reset parameters}
    $c \gets (F_+ - F_-) / 2\epsilon$ \tcp*{Calculate finite difference}
    \ForEach{$X_l \in \bX$}{
        $X_\ell \gets X_\ell - \alpha \cdot c(U_\ell V_\ell^T/r_l)$ \tcp*{Update parameters in place}
    }
}
\SetKwFunction{Perturb}{Perturbation}
\SetKwProg{Fn}{Function}{:}{}
\Fn{\Perturb{$\bX, \epsilon, \{U_\ell, V_\ell\}$}}{
    \ForEach{$X_\ell \in \bX$}{
        $X_\ell \gets X_\ell + \epsilon U_\ell V_\ell^T$ \tcp*{Modify parameters in place}
    }
    \Return{$\bX$} \;
}
\end{algorithm}

{\textbf{Lazy sampling strategy.} In the main recursion (\ref{Lozo-main}), the variable \(\bX^t\) is updated within the subspace spanned by \(\bU^t\) and \(\bV^t\) at each iteration \(t\). However, if \(\bU^t\) and \(\bV^t\) are resampled at every iteration, the subspace will shift too frequently. This limits the algorithm's ability to adequately explore one low-rank subspace over a longer period, potentially causing abrupt changes in the model parameters \(\bX\) at each iteration and degrading fine-tuning performance. }

Additionally, ZO methods capture less information about the true gradient compared to FO algorithms, necessitating more iterations to achieve similar performance. In other words, multiple ZO steps may be required to match the progress of a single FO step. This suggests that maintaining a low-rank structure in the gradient estimator at each step is insufficient; instead, the cumulative sum of gradient estimators over several consecutive iterations must also preserve a low-rank structure.

The motivations outlined above lead us to propose a lazy sampling strategy. While \( \bU \) is sampled at every iteration \( t \), we only sample \( \bV \) every \( \nu \) iterations, where \( \nu > 0 \) represents the chosen period duration. During the iterations \( t \in \{k\nu, \dots, (k+1)\nu - 1\} \) for each period \( k \), the matrix \( \bV^{(k)} \) remains fixed, thus restricting the model update to the subspace spanned by \( \bV^{(k)} \). This leads to our proposed LOZO algorithm, whose update rule for any $t\ge 0$ is defined as: 
\begin{align}\label{Lozo-main-lazy}
\bX^{t+1} = \bX^t - \alpha \hat{\nabla} F(\bX^t; \xi^t), \quad \text{where} \quad \hat{\nabla} F(\bX^t; \xi^t) = \text{LGE}(\bX^t,\bU^t, \bV^{(k)}, \br,\epsilon, \xi^t).
\end{align}
With the lazy sampling strategy, \( \hat{\nabla} F(\bX^t; \xi^t) \) consistently lies within the subspace determined by \( \bV^{(k)} \) for any \( t \in \{k\nu, \dots, (k+1)\nu - 1\} \).
Therefore, the accumulation of the estimated gradients over these consecutive \( \nu \) steps, which can be viewed as a more accurate approximation of the true gradient in a single FO step, has a rank that does not exceed \( \br \). When \( \nu = 1 \),  the LOZO update rule (\ref{Lozo-main-lazy}) reduces to the standard recursion (\ref{Lozo-main}). LOZO (\ref{Lozo-main-lazy}) can be implemented in Algorithm \ref{alg-lowrank-zo} with memory efficiency.

\textbf{Hyperparameter tuning.} The parameter \(\nu\), which defines the number of steps over which \(\bX^t\) is updated within the same subspace, is critical for performance and should be set to a moderate value. If \(\nu\) is too small, frequent subspace shifts may lead to abrupt model changes, while a \(\nu\) that is too large limits the algorithm to focus on only a few subspaces, potentially reducing generalization. The parameter \( \br \) defines the rank of the gradient estimator. Since the true gradient rank is unknown, we typically set \(\br\) as a small constant that is significantly less than both \(m_\ell\) and \(n_\ell\) to avoid additional memory overhead. {In our experiments, we set $r_\ell = r$ through all layers. The typical choices for the parameters are \( r = 2, 4, 8 \) and \( \nu = 50, 100 \).}

\subsection{LOZO is Essentially a Zeroth-order Subspace Optimization Method}
\label{sec-illustration}

This section offers an in-depth understanding on LOZO, showing that it can be interpreted as a ZO subspace optimization method. This interpretation provides an insight into why LOZO can effectively solve problem (\ref{prob-opt}) even with low-rank gradient estimates and a lazy sampling strategy.

\textbf{Random coordinate minimization.} We begin by revisiting the classical coordinate minimization algorithm for high-dimensional optimization problems. Consider the unconstrained minimization problem \(\min_{x \in \mathbb{R}^d} \{f(x)\}\), where the dimension \(d\) is exceedingly large. A commonly used and efficient method to tackle this problem is the coordinate minimization algorithm. Let \(I := [e_1; \cdots; e_d]\) be the \(d\)-dimensional identity matrix, where \(e_i\) represents the \(i\)-th basis vector. The random coordinate minimization algorithm then iterates as follows:
\begin{align}
    b_k^\star &= \argmin_{b \in \mathbb{R}} \{f(x^k + b \cdot  e_{i_k})\}, \ \mbox{where $i_k \sim \mathcal{U}\{1,\dots, d\}$}, \\
    x^{k+1} &= x^k + b_k^\star \cdot  e_{i_k},
\end{align}
where $x^0$ is the initialized variable. The coordinate minimization algorithm has strong convergence guarantees, as demonstrated in studies such as \citep{hong2017iteration, tseng2001convergence, wright2015coordinate}.

\textbf{Random subspace minimization.} Analogous to random coordinate minimization, the random subspace minimization approach is to solve optimization problems involving large matrix variables. To solve problem (\ref{prob-opt}), i.e., $\min_{\bX} f(\bX) = \mathbb{E}_{\xi} [F(\bX; \xi)]$, we can employ the following recursions
\begin{align}
    \bB_k^\star &= \argmin_{\bB} \{f(\bX^{(k)} + \bB (\bV^{(k)})^T)\}, \ \text{where } \bV^{(k)} \text{ follows a normal distribution}, \label{subspace-B} \\
    \bX^{(k+1)} &=  \bX^{(k)} + \bB_k^\star (\bV^{(k)})^T. \label{subspace-B-2}
\end{align}
Here, $\bV$ is a low-rank matrix and $\bB$ is a matrix variable with much smaller size than $\bX$. For example, in the \( \ell \)-th layer, the matrix variable \( B_\ell \) has dimensions \( m_\ell \times r_\ell \), while \( X_\ell \) has dimensions \( m_\ell \times n_\ell \).
The random subspace optimization approach addresses the original problem (\ref{prob-opt}) by solving a series of subproblems that involve significantly smaller matrix variables, as shown in (\ref{subspace-B}). 

\textbf{ZO subspace optimization method.} To solve the \( k \)-th subproblem in (\ref{subspace-B}), we apply the standard ZO-SGD and iterate for \( \nu \) steps. The result is then used as an inexact solution to (\ref{subspace-B}), followed by the update step in (\ref{subspace-B-2}). Specifically, the update rule for the ZO subspace method is given by:
\begin{subequations}\label{eqa-zo-subspace}
\begin{align}
\bB^{(k,s+1)} &= \bB^{(k,s)} - \boldsymbol{\gamma} \hat{\nabla}_{\bB} F(\tilde\bX^{(k)} + \bB^{(k,s)} (\bV^{(k)})^T; \xi^{(k,s)}), \quad s=0,\cdots, \nu-1, \label{eqa-zo-subspace-inner} \\
\tilde{\bX}^{(k+1)} &= \tilde{\bX}^{(k)} + \bB^{(k,\nu)} (\bV^{(k)})^T. \label{eqa-zo-subspace-outer}
\end{align}
\end{subequations}
 Here, we initialize $\bB^{(k,0)} = \mathbf{0}$, the superscript \( k \) indicates that the recursion is solving the \( k \)-th subproblem in (\ref{subspace-B}), \( \boldsymbol{\gamma}\) is the step size, and \( \hat{\nabla}_{\bB} F(\tilde\bX^{(k)} + \bB^{(k,s)} (\bV^{(k)})^T; \xi^{(k,s)}) \) is computed using the RGE scheme. Specifically, we sample \( \bU^{(k,s)} \) from a normal distribution and compute:
\begin{align}\label{eqa-RGE-B}
&\ \hat{\nabla}_{\bB}F(\tilde\bX^{(k)} + \bB^{(k,s)} (\bV^{(k)})^T;\xi^{(k,s)})  \\
=&\ \frac{F\hspace{-0.1mm}(\tilde\bX^{(k)} \hspace{-1mm}+\hspace{-1mm} \hspace{-0.1mm}(\bB^{(k,s)} \hspace{-1mm}+\hspace{-1mm} \epsilon \bU^{(k,s)}) \hspace{-0.1mm}(\bV^{(k)})^T;\xi^{(k,s)}) \hspace{-1mm}-\hspace{-1mm} F(\hspace{-0.2mm}\tilde\bX^{(k)} \hspace{-1mm}+\hspace{-1mm} (\bB^{(k,s)} \hspace{-1mm}-\hspace{-1mm} \epsilon \bU^{(k,s)}) \hspace{-0.1mm}(\hspace{-0.2mm}\bV^{(k)}\hspace{-0.2mm})^T;\xi^{(k,s)}\hspace{-0.2mm})}{2\epsilon} \bU^{(k,s)}. \nonumber 
\end{align}

\textbf{Equivalence between ZO subspace method and LOZO algorithm.} The ZO subspace method (\ref{eqa-zo-subspace}) is equivalent to the LOZO algorithm (\ref{Lozo-main-lazy}) when the step size \( \boldsymbol{\gamma} = \alpha/\br \), as will be shown below. If both algorithms are initialized identically, i.e., \( \bX^{0} = \tilde{\bX}^{(0)} \), by comparing the update rules for \( \bX^t \) in LOZO (\ref{Lozo-main-lazy}) and (\ref{LGE-compact}), and for \( \bB \) in the ZO subspace method (\ref{eqa-zo-subspace-inner}) and (\ref{eqa-RGE-B}), it is straightforward to see that \( \bX^t = \tilde{\bX}^{(k)} + \bB^{(k,s)} (\bV^{(k)})^T \) holds when \( t = k\nu + s \) and \( s \in \{0, 1, \dots, \nu\} \) (see the detailed derivation in Appendix \ref{app-equivalence}). Thus, we have \( \bX^{k\nu} = \tilde{\bX}^{(k)} \) for any \( k \), which implies the equivalence between these two methods. 

\subsection{Convergence Analysis}
Based on the aforementioned interpretation, we can establish convergence guarantees for the LOZO algorithm. We adopt the following assumptions, which are standard in stochastic optimization.
\begin{assumption}\label{ass-smooth}
    For any $\xi$, the function $F(\bX;\xi)$ is differentiable with respect to $\bX$. Furthermore, \vspace{-2.5mm}
    \begin{itemize}
        \item The gradient $\nabla F(\bX;\xi)$ is uniformly $L$-Lipschitz continuous, i.e., $\forall \bX,\bY$,
        \begin{equation}
        \|\nabla F(\bX;\xi) - \nabla F(\bY;\xi)\| \le L\|\bX-\bY\|, \quad \forall \xi, 
        \end{equation}
        where $\|\bX\| := \sqrt{\sum_{\ell=1}^\cL \|X_\ell\|_F^2}\ $ for any $\bX = \{X_\ell\}_{\ell=1}^\cL$. \vspace{-2mm}
        \item The stochastic gradient is unbiased and has bounded variance, i.e., $\forall \bX$,
        \begin{equation}
            \mathbb{E}[\nabla F(\bX;\xi)] = \nabla f(\bX) \quad  
            \text{ and }  \quad 
            \mathbb{E}\|\nabla F(\bX;\xi) - \nabla f(\bX)\|^2 \le \sigma^2. 
        \end{equation}
    \end{itemize}
\end{assumption} 

\begin{assumption}\label{ass-orthogonal}
The random matrix \( \bV = \{V_\ell\}_{\ell=1}^\cL \) is drawn from a distribution such that \( V_\ell^T V_\ell = n_\ell I \) and \( \mathbb{E}[V_\ell V_\ell^T] = r_\ell I \) for each \( \ell \).
\end{assumption}
\begin{remark}\label{rem-orthogonal}
Given that \( n_\ell \gg r_\ell \) in practice, when each \( V_\ell \) is drawn from a standard normal distribution, it typically holds that \( V_\ell^T V_\ell \approx n_\ell I \). However, to rigorously satisfy Assumption \ref{ass-orthogonal}, each \( V_\ell \) should be drawn from a Haar distribution or as a random coordinate matrix (see \citep{kozak2023zeroth}, Examples 1 and 2 for more details).
\end{remark}
The following theorem establishes the convergence rate of the LOZO algorithm, with the detailed proof provided in Appendix \ref{app-proof}.
\begin{theorem}\label{thm-zo-lo}
Under Assumptions \ref{ass-smooth} and \ref{ass-orthogonal}, and letting \( T = K\nu \), with suitable choices of \( \alpha \) and \( \epsilon \), the sequence of the $k\nu$-th variables $\{\bX^{k\nu}\}$ generated by LOZO converges at the following rate:
\begin{equation}
    \frac{1}{K}\sum_{k=0}^{K-1} \mathbb{E}\|\nabla f(\bX^{k\nu})\|^2 \le O\left(\sqrt{\frac{\Delta_0 L\tilde d\sigma^2}{T}} + \frac{\Delta_0 Ld \nu}{T}\right),
\end{equation}
where \( \Delta_0 := f(\bX^0) - f^* \), $\tilde d =\sum_{\ell=1}^\cL (m_\ell n_\ell^2/r_\ell)$ and $d = \sum_{\ell=1}^\cL m_\ell n_\ell$.
\end{theorem}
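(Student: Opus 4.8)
The plan is to exploit the subspace equivalence established above: since $\bX^{k\nu}=\tilde\bX^{(k)}$ and the iterates satisfy $\bX^{k\nu+s}=\tilde\bX^{(k)}+\bB^{(k,s)}(\bV^{(k)})^T$, it suffices to track $f$ along the outer sequence $\{\tilde\bX^{(k)}\}$ and to derive a per-period descent inequality whose leading term is $\mathbb{E}\|\nabla f(\bX^{k\nu})\|^2$. I would begin from the $L$-smoothness descent lemma applied to a single LOZO step $t=k\nu+s$,
\[
f(\bX^{t+1})\le f(\bX^t)-\alpha\langle\nabla f(\bX^t),\hat\nabla F(\bX^t;\xi^t)\rangle+\tfrac{L\alpha^2}{2}\|\hat\nabla F(\bX^t;\xi^t)\|^2,
\]
and then take expectations in two stages matching the two sources of randomness: first over $(\bU^t,\xi^t)$ conditioned on the period's fixed $\bV^{(k)}$, and afterwards over $\bV^{(k)}$ itself.

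Conditioning on $\bV^{(k)}$, a Taylor expansion of the finite difference shows the LGE estimator is unbiased up to an $O(\epsilon)$ term controlled by $L$ (the quantitative form of Proposition~\ref{prop:unbiased}); a direct Gaussian moment computation then gives $\mathbb{E}[\hat\nabla_{X_\ell}F]=\nabla_{X_\ell}f\,V_\ell^{(k)}(V_\ell^{(k)})^T/r_\ell$ and, for the second moment, $\mathbb{E}\|\hat\nabla_{X_\ell}F\|_F^2\lesssim (m_\ell n_\ell^2/r_\ell)(\|\nabla_{X_\ell}f\|_F^2+\sigma^2)+O(\epsilon^2)$, where the fourth-moment identity $\mathbb{E}[\langle A,U\rangle^2\|U\|_F^2]=(m_\ell r_\ell+2)\|A\|_F^2$ together with $V_\ell^TV_\ell=n_\ell I$ (Assumption~\ref{ass-orthogonal}) produces the dimension inflation and hence the factor $\tilde d=\sum_\ell m_\ell n_\ell^2/r_\ell$ after summing over layers with the bounded-variance bound. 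The inner-product term becomes $\sum_\ell\|\nabla_{X_\ell}f(\bX^t)V_\ell^{(k)}\|_F^2/r_\ell$, and taking the expectation over $\bV^{(k)}$ using $\mathbb{E}[V_\ell V_\ell^T]=r_\ell I$ collapses this to exactly $\|\nabla f\|^2$. This step is where Assumption~\ref{ass-orthogonal} is essential, and it is clean only at $s=0$, where $\bX^{k\nu}$ is independent of $\bV^{(k)}$.

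The main obstacle is precisely the drift for $s>0$: once a period begins, the iterate $\bX^{k\nu+s}$ depends on $\bV^{(k)}$, so the $V$-expectation no longer factorizes and $\nabla f(\bX^{k\nu+s})$ differs from the target $\nabla f(\bX^{k\nu})$. I would control this by bounding $\|\nabla f(\bX^{k\nu+s})-\nabla f(\bX^{k\nu})\|\le L\|\bB^{(k,s)}(\bV^{(k)})^T\|$ via smoothness and accumulating the displacements $\bB^{(k,s)}=-\boldsymbol{\gamma}\sum_{s'<s}\hat\nabla_\bB F$, whose expected squared sizes are governed by the second-moment bound above. Summing the resulting drift over the $\nu$ inner steps is what generates the $\nu$-dependent, noise-independent error term and, after restoring the full dimension $d=\sum_\ell m_\ell n_\ell$, yields the additive $\Delta_0 Ld\nu/T$ contribution; keeping this term linear (rather than quadratic) in $\nu$ is the delicate part and forces the step size $\alpha$ to be taken small.

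Finally I would sum the per-period inequality over $k=0,\dots,K-1$, telescope $f(\bX^0)-f^*=\Delta_0$, divide by $K$, and substitute $T=K\nu$, leaving a bound of the form $\frac{1}{K}\sum_k\mathbb{E}\|\nabla f(\bX^{k\nu})\|^2\lesssim \frac{\Delta_0}{\alpha T}+\alpha L\tilde d\sigma^2+\frac{\Delta_0 Ld\nu}{T}+(\epsilon\text{-terms})$. Choosing $\epsilon$ small enough (e.g.\ $\epsilon=O(1/\sqrt{T})$) to absorb the finite-difference bias and optimizing $\alpha\asymp\sqrt{\Delta_0/(L\tilde d\sigma^2 T)}$ to balance the first two terms then produces the claimed rate $O\big(\sqrt{\Delta_0 L\tilde d\sigma^2/T}+\Delta_0 Ld\nu/T\big)$.
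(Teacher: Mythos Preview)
Your plan is sound and would produce the stated rate: the moment identities you quote for the LGE estimator are correct, you correctly isolate the only step where independence of $\bX^{k\nu}$ and $\bV^{(k)}$ is needed, and your drift control via $\|\nabla f(\bX^{k\nu+s})-\nabla f(\bX^{k\nu})\|$ together with a step-size restriction $\alpha\lesssim 1/(Ld\nu)$ is exactly what generates the $\Delta_0 Ld\nu/T$ term.

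The paper organizes the same argument differently. Rather than applying the $L$-smoothness descent lemma to $f$ at each inner step in the $\bX$-space, it passes to the reduced variable and works with the function $g_{X,V}(B):=f(X+BV^T)$, which (by Assumption~\ref{ass-orthogonal}) is $\tilde L=nL$-smooth with stochastic-gradient variance $\tilde\sigma^2=n\sigma^2$. In that space the inner loop is a \emph{standard} RGE-based ZO-SGD on $g_{X,V}$, so the paper can invoke the usual Gaussian-smoothing device $g^\epsilon$ (handling the finite-difference bias via $\|\nabla g^\epsilon-\nabla g\|^2\le\tilde L^2 mr\epsilon^2$ and $|g^\epsilon-g|\le\tilde Lmr\epsilon^2/2$) and apply the descent lemma \emph{once} over the whole period, from $B^0$ to $B^\nu$, rather than summing per-step inequalities. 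Only at the very end is the expectation over $V^{(k)}$ taken, where $\mathbb{E}\|\nabla f(X^{k\nu})V^{(k)}\|_F^2=r\|\nabla f(X^{k\nu})\|_F^2$ turns the $B$-space descent into the desired $\|\nabla f\|^2$ bound.

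What each buys: the paper's $B$-space reduction is cleaner and more modular, since the roles of $\bU$ and $\bV$ are fully decoupled (all the $\bU$-analysis is standard RGE, all the $\bV$-dependence is a single outer expectation), and the Gaussian-smoothing lemma packages the $\epsilon$-bias terms systematically instead of carrying ad hoc $O(\epsilon)$ remainders. Your $\bX$-space route is more explicit about the LGE moments but requires redoing those computations and tracking $V$-dependence through every step; it also makes the per-period descent lemma slightly messier because you telescope $\nu$ micro-descent inequalities rather than a single macro one. Both routes lead to the same intermediate bound $\frac{\Delta_0}{T\alpha}+O(L\tilde d\sigma^2)\alpha+(\epsilon\text{-terms})$ under $\alpha\lesssim 1/(Ld\nu)$, and hence the same final rate after optimizing $\alpha$ and $\epsilon$.
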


\textbf{Difference between LOZO and MeZO-LoRA.} Theorem \ref{thm-zo-lo} implies LOZO solves $\min_{\bX} f(\bX):= \mathbb{E}_{\xi}[F(\bX;\xi)]$ directly even when using low-rank gradient estimates. In contrast, the MeZO-LoRA approach, which combines MeZO \citep{malladi2023fine} and LoRA \citep{hu2021lora}, solves $\min_{\bA,\bB} f(\bX+\bA\bB):= \mathbb{E}_{\xi}[F(\bX+\bA\bB;\xi)]$. Notably, MeZO-LoRA can only optimize the low-rank adapters \(\bA\) and \(\bB\), without the capability to optimize the full parameter \(\bX\). This distinction explains why LOZO outperforms MeZO-LoRA in most of our empirical studies.

\subsection{LOZO with Momentum}

\textbf{Momentum technique requires additional memory overhead.} The momentum technique is widely used in modern deep learning to reduce gradient variance and accelerate the convergence of training. The ZO-SGD with momentum (ZO-SGD-M) iteration can be expressed as follows:
\begin{equation}\label{eqa-zo-update-momentum}
\begin{aligned}
\bM^{t} = \beta \bM^{t-1} + (1-\beta) \hat{\nabla} F(\bX^t; \xi^t), \quad \bX^{t+1} = \bX^t - \alpha \bM^{t},
\end{aligned}
\end{equation}
where \( \bM^t:=\{M_l^t\}_{\ell=1}^\cL \) is the momentum term, \( \beta \) is the momentum coefficient, and \( \hat{\nabla} F(\bX^t; \xi^t) \) denotes the ZO estimated gradient at iteration \( t \). Compared with the vanilla ZO-SGD, ZO-SGD-M introduces additional memory overhead since it requires storing the momentum term, which is proportional to the size of the model. 


\textbf{The LOZO-M algorithm introduces negligible memory overhead.} We can integrate the proposed LOZO algorithm with the momentum technique (LOZO-M) without the memory overhead issue.
To simplify the formulation, we denote the finite difference of function values at iteration \( t \), where \( t \in \{k\nu, \dots, (k+1)\nu - 1\} \), as \( c^t \). The gradient estimator for the LOZO algorithm, as introduced in (\ref{Lozo-main-lazy}), can then be represented as \(
\hat{\nabla} F(\bX^t; \xi^t) = c^t \bU^t (\bV^{(k)}/\br)^T\).

We now introduce LOZO-M, the memory-efficient version of ZO-SGD with momentum. The key observation is that \( \bV^{(k)} \) remains fixed for \( t \in \{k\nu, \ldots, (k+1)\nu - 1\} \). Consequently, it suffices to accumulate \( c^t \bU^t \) instead of the full gradient estimator \( \hat{\nabla} F(\bX^t; \xi^t) \) for updating the momentum, thereby significantly reducing memory overhead. The update rule is expressed as follows:
\begin{equation}\label{eqa-zo-update-momentum-low-rank}
\begin{aligned}
\bN^{t} = \beta \bN^{t-1} + (1-\beta) c^t \bU^t,  \quad \bX^{t+1} = \bX^t - \alpha \bN^t(\bV^{(k)}/\br)^T,
\end{aligned}
\end{equation}
where \( \bN^t (\bV^{(k)}/\br)^T \) corresponds to the original momentum. When updating \( \bX^t \), we compute \( \bN^t (\bV^{(k)}/\br)^T \) layer by layer and discard the result immediately after updating the weights of each layer. Consequently, compared with the vanilla ZO-SGD-M (\ref{eqa-zo-update-momentum}), LOZO-M (\ref{eqa-zo-update-momentum-low-rank}) only requires storing \( \bN^t := \{ N_\ell^t \}_{\ell=1}^\mathcal{L} \). Since \( r_\ell \ll \min(m_\ell, n_\ell) \), the additional memory overhead is negligible.

\textbf{Momentum projection.} When \( \bV^{(k)} \) is updated, an additional step is required. Specifically, at \( t = (k+1)\nu \), the gradient estimator \( \hat{\nabla} F(\bX^t; \xi^t) \) takes the form \( c^t \bU^t (\bV^{(k+1)}/\br)^T \) due to resampling, while the momentum from the previous step is \( \bN^{t-1} (\bV^{(k)}/\br)^T \). Thus, we cannot update the momentum by simply combining \( c^t \bU^t \) and \( \bN^{t-1} \) as per (\ref{eqa-zo-update-momentum-low-rank}). To address this, we project the old momentum onto the new subspace spanned by \( \bV^{(k+1)} \) before updating \( \bN^t \). We compute:
\[
\tilde{\bN}^{t-1} = \arg\min_{\bN} \|\bN^{t-1} (\bV^{(k)})^T - \bN (\bV^{(k+1)})^T\|.
\]
By doing this, the projected momentum \( \tilde{\bN}^{t-1} (\bV^{(k+1)}/\br)^T \) becomes a good approximation of the momentum from the previous step, taking the same form as the gradient estimator \( \hat{\nabla} F(\bX^t; \xi^t) \). Therefore, we use \( \tilde{\bN}^{t-1} \) to replace \( \bN^{t-1} \) in the momentum update (\ref{eqa-zo-update-momentum-low-rank}) at \( t = (k+1)\nu \). Given that \( V_\ell^T V_\ell \approx n_\ell I \) for any \( \ell \) (see Remark \ref{rem-orthogonal}), the solution is \( \tilde{\bN}^{t-1} = \bN^{t-1} (\bV^{(k)})^T (\bV^{(k+1)}/\boldsymbol{n}) \), where \( \boldsymbol{n}:=\{n_\ell\}_{\ell=1}^\cL \). The detailed LOZO-M algorithm is provided in Algorithm \ref{alg-lowrank-zom} in the Appendix.

\section{Experiments}
\label{exp}
This section presents the performance of our algorithm on the SuperGLUE \citep{wang2019superglue} benchmark. We compare our proposed LOZO and LOZO-M algorithms with MeZO \citep{malladi2023fine} as well as its variants, and other baselines including zero-shot and in-context learning (ICL) approaches. We also test full fine-tuning and LoRA using the gradient-based Adam method \citep{kingma2014adam}, referred to as FT and FT-LoRA, respectively.

Our experiments evaluate the algorithms on language models (LMs) of various scales, including RoBERTa-large \citep{liu2019roberta} and large autoregressive LMs (OPT-13B, 30B, 66B \citep{zhang2022opt}). For a fair comparison, we conduct a full grid search of the parameters provided in \citep{malladi2023fine} and select the best results for MeZO and its variants. The datasets used for RoBERTa-large and OPT models are listed in Appendix \ref{appen-data}.

\subsection{Medium-sized Masked Language Models}
\label{exp-medium}
We conduct experiments employing RoBERTa-large on tasks including sentiment classification, natural language inference and topic classification. To mitigate the effects of random variability, all experimental results reported in this subsection are the means of the outcomes resulted from five different random seeds. 

\begin{table}[!htbp]
\centering
\begin{tabular}{lcccc}
    \toprule
    Algorithm & \multicolumn{2}{c}{MNLI} & \multicolumn{2}{c}{SNLI} \\
     \cmidrule(lr){2-3}  \cmidrule(lr){4-5}
    & Accuracy ($\%$) & Memory Usage (GB) & Accuracy ($\%$) & Memory Usage (GB) \\
    \midrule
    LOZO        & 61.6 & 2.83 & 73.4 & 2.83\\ 
    LOZO-M      & \textbf{62.7} & 2.84 & \textbf{74.0} & 2.84 \\ 
    MeZO        & 56.7 & 3.00 & 68.5 & 3.00\\ 
    MeZO-M      & 58.9 & 5.89 & 69.6 & 5.89 \\ 
    MeZO-Adam   & 62.6 & 7.42 & 72.7 & 7.42\\ 
    \bottomrule
\end{tabular}
\vspace{0.2cm}  
\caption{\small Accuracy and Memory Consumption of LOZO and MeZO with their respective variants.}
\label{tab:performance-memory}
\end{table}

\begin{figure}[!htbp]
    \centering
    \includegraphics[width=0.9\linewidth]{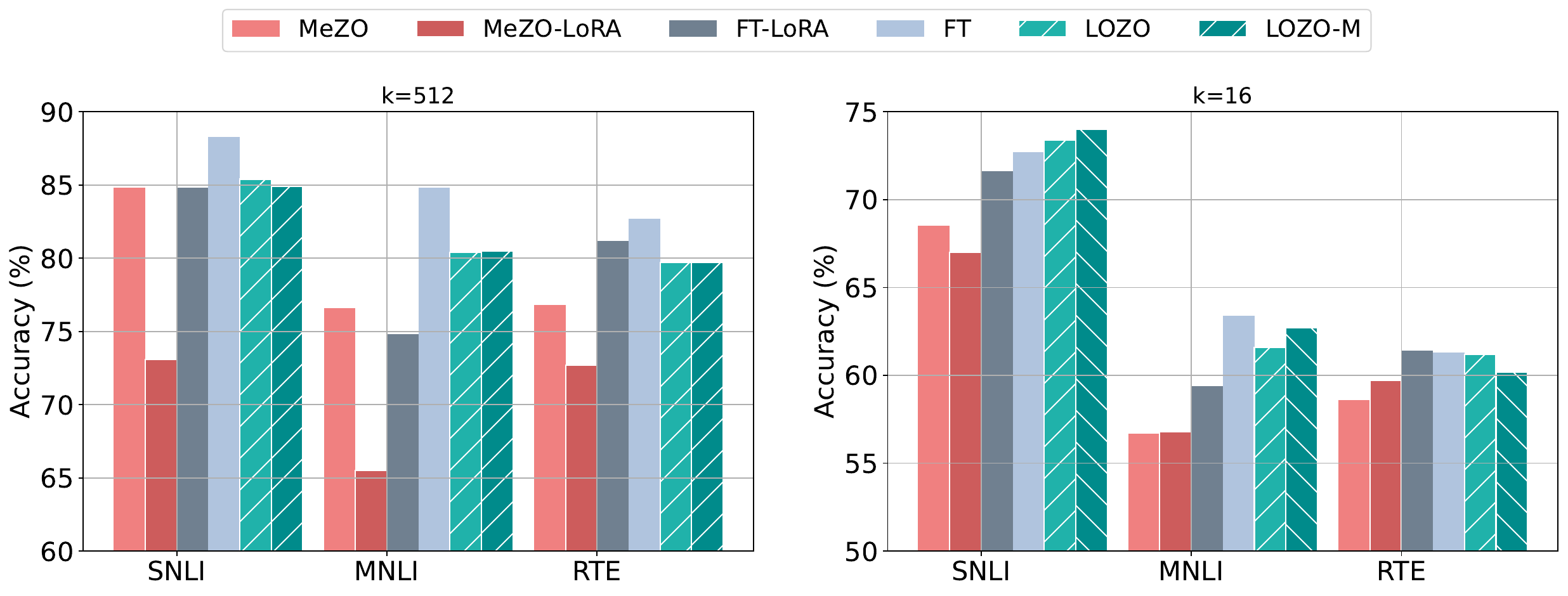}
    \caption{\small The figures illustrate the performance of different algorithms on RoBERTa-large across three tasks (SNLI, MNLI, and RTE), with the left panel corresponding to \(k=512\) and the right panel corresponding to \(k=16\). Detailed numerical results are provided in Table \ref{appen-tab-roberta}.}
    \label{fig:roberta-per}
\end{figure}

\begin{table*}[!htbp]
\centering
\resizebox{\textwidth}{!}{
\begin{tabular}{lccccccccccc}
\toprule
\textbf{Task} & \textbf{SST-2} & \textbf{RTE} & \textbf{CB} & \textbf{BoolQ} & \textbf{WSC} & \textbf{WiC} & \textbf{MultiRC} & \textbf{COPA} & \textbf{ReCoRD} & \textbf{SQuAD} & \textbf{DROP} \\
\midrule
Zero-shot & 58.8 & 59.6 & 46.4 & 59.0 & 38.5 & 55.0 & 46.9 & 80.0 & 81.0 & 46.2 & 14.6 \\
ICL & 87.0 & 62.1 & 57.1 & 66.9 & 39.4 & 50.5 & 53.1 & 87.0 & \textbf{82.3} & 75.9 & 29.5 \\

\midrule
MeZO & 91.3 & 68.2 & 66.1 & 68.1 & 61.5 & 59.4 & 59.4 & 88.0 & 81.3 & 81.8 & \textbf{31.3} \\
MeZO-LoRA & 89.6 & 67.9 & 67.8 & \textbf{73.5} & \textbf{63.5} & 60.2 & 61.3 & 84.1 & 81.5 & 82.1 & \textbf{31.3} \\
\midrule
LOZO & \textbf{91.7} & \textbf{70.4} & \textbf{69.6} & 71.9 & \textbf{63.5} & \textbf{60.8} & \textbf{63} & \textbf{89.0} & 81.3 & \textbf{84.9} & 30.7 \\
\midrule
FT & 91.8 & 70.9 & 84.1 & 76.9 & 63.5 & 70.1 & 71.1 & 79.0 & 74.1 & 84.9 & 31.3 \\
\bottomrule
\end{tabular}
}
\caption{\small Experiments on OPT-13B (with 1000 examples). ICL: in-context learning; FT: full fine-tuning with Adam. The best results are shown in \textbf{bold} except for FT.}
\label{tab:opt-13}
\end{table*}

\textbf{LOZO outperforms MeZO and MeZO-LoRA on the majority of datasets and achieves performance comparable to FT.} As shown in Figure \ref{fig:roberta-per}, our algorithm demonstrates a performance gain on the three listed datasets, with a particularly significant improvement on the MNLI dataset, surpassing both MeZO and MeZO-LoRA. The performance gap between LOZO and FT is generally less than 1\%, and with a larger \( k \), LOZO can further narrow this gap and even exceed its performance. We provide the complete results for RoBERTa-large in Appendix \ref{appen-results}.

\textbf{LOZO-M consumes similar memory usage to LOZO while delivering superior performance compared to MeZO and MeZO-Adam.} We measure the actual memory consumption during the training of RoBERTa-large using LOZO, MeZO, and their respective variants to evaluate the memory efficiency of our methods. As shown in Table \ref{tab:performance-memory}, the additional memory overhead introduced by the momentum technique in LOZO is negligible. In contrast, MeZO fails to achieve this efficiency.

\subsection{Large Autoregressive Language Models}

To further evaluate the effectiveness of LOZO on large language models, we extend our study to the OPT models \citep{zhang2022opt} with billions of parameters (13B, 30B and 66B). The results are presented in Table \ref{tab:opt-13} and Table \ref{tab:opt-30}. 

\textbf{Compared to MeZO and MeZO-LoRA, LOZO demonstrates a clear improvement across the majority of datasets.} For instance, LOZO achieves notable performance gains on OPT-30B and OPT-66B, as presented in Table \ref{tab:opt-30}, outperforming MeZO and other baselines in most tasks. Furthermore, the results in Table \ref{tab:opt-13} indicate that LOZO not only surpasses MeZO and MeZO-LoRA but also approaches the performance exhibited by FT on most cases.

\textbf{LOZO yields faster convergence rates across different model scales, including 13B, 30B and 66B.} As illustrated in Figure \ref{fig:opt-loss}, the proposed method consistently achieves faster convergence across various datasets and model scales. For example, in the WIC dataset with the OPT-66B configuration, the LOZO algorithm requires only half the number of training epochs to achieve the same training loss as that of the MeZO method, while simultaneously exhibiting smaller loss oscillations. 

\begin{table*}[!htbp]
\centering
\begin{tabular}{lcccccc}
\toprule
\textbf{Task} & \textbf{SST-2} & \textbf{RTE} & \textbf{BoolQ} & \textbf{WSC} & \textbf{WiC} & \textbf{SQuAD} \\
\midrule
30B zero-shot & 56.7 & 52.0 & 39.1 & 38.5 & 50.2 & 46.5 \\
30B ICL & 81.9 & \textbf{66.8} & 66.2 & 56.7 & 51.3 & 78.0 \\
30B MeZO & 90.7 & 64.3 & 68.2 & 63.5 & 56.3 & \textbf{86.1} \\

30B LOZO & \textbf{92.8} & 65.3 & \textbf{72.3} & \textbf{64.4} & \textbf{57.2} & 85.6 \\
\midrule
66B zero-shot & 57.5 & 67.2 & 66.8 & 43.3 & 50.6 & 48.1 \\
66B ICL & 89.3 & 65.3 & 62.8 & 52.9 & 54.9 & 81.3 \\
66B MeZO & 92.0 & 71.5 & 73.8 & \textbf{64.4} & 57.8 & 84.0 \\
66B LOZO & \textbf{92.5} & \textbf{74.0} & \textbf{74.5} & 63.5 & \textbf{59.4} & \textbf{85.8} \\
\bottomrule
\end{tabular}
\caption{\small Experiments on OPT-30B and OPT-66B on SuperGLUE benchmark. Our results show that LOZO is superior on most tasks compared to the other baselines. The best results are shown in \textbf{bold}.}
\label{tab:opt-30}
\end{table*}

\begin{figure}[!htbp]
    \centering
    \begin{minipage}{0.32\textwidth}
        \includegraphics[width=\linewidth]{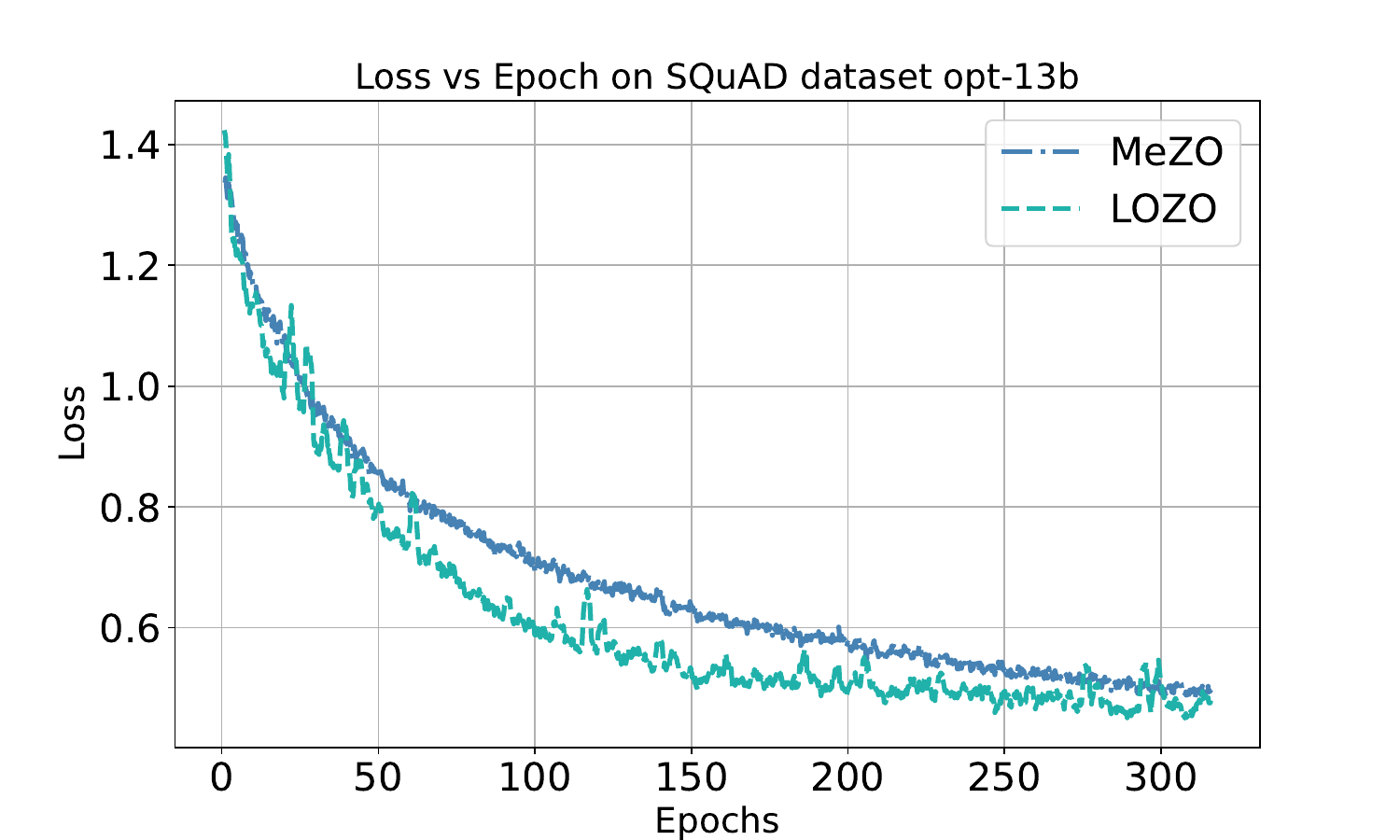}
    \end{minipage}
    \hfill
    \begin{minipage}{0.32\textwidth}
        \centering
        \includegraphics[width=\linewidth]{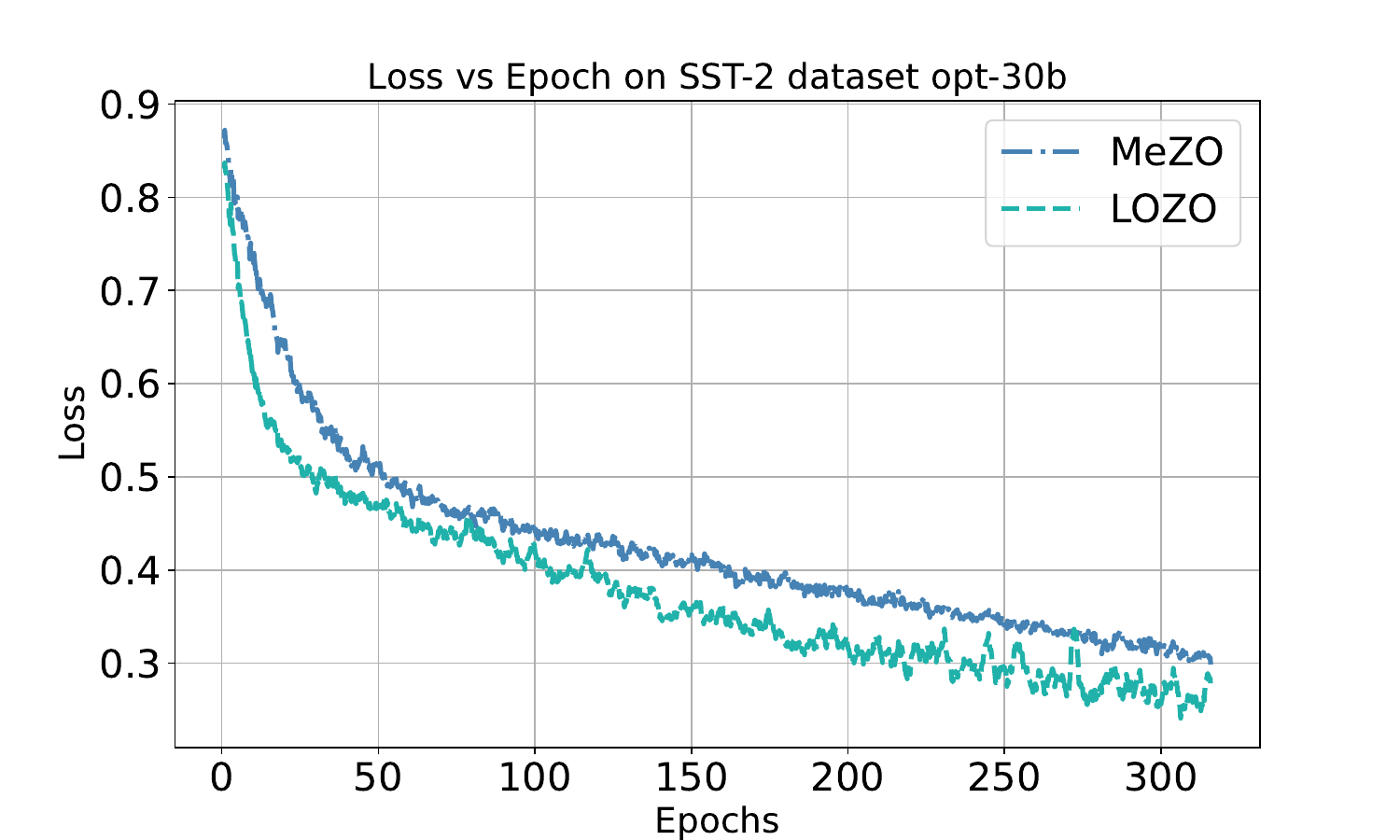}
    \end{minipage}
    \hfill
    \begin{minipage}{0.32\textwidth}
        \centering
        \includegraphics[width=\linewidth]{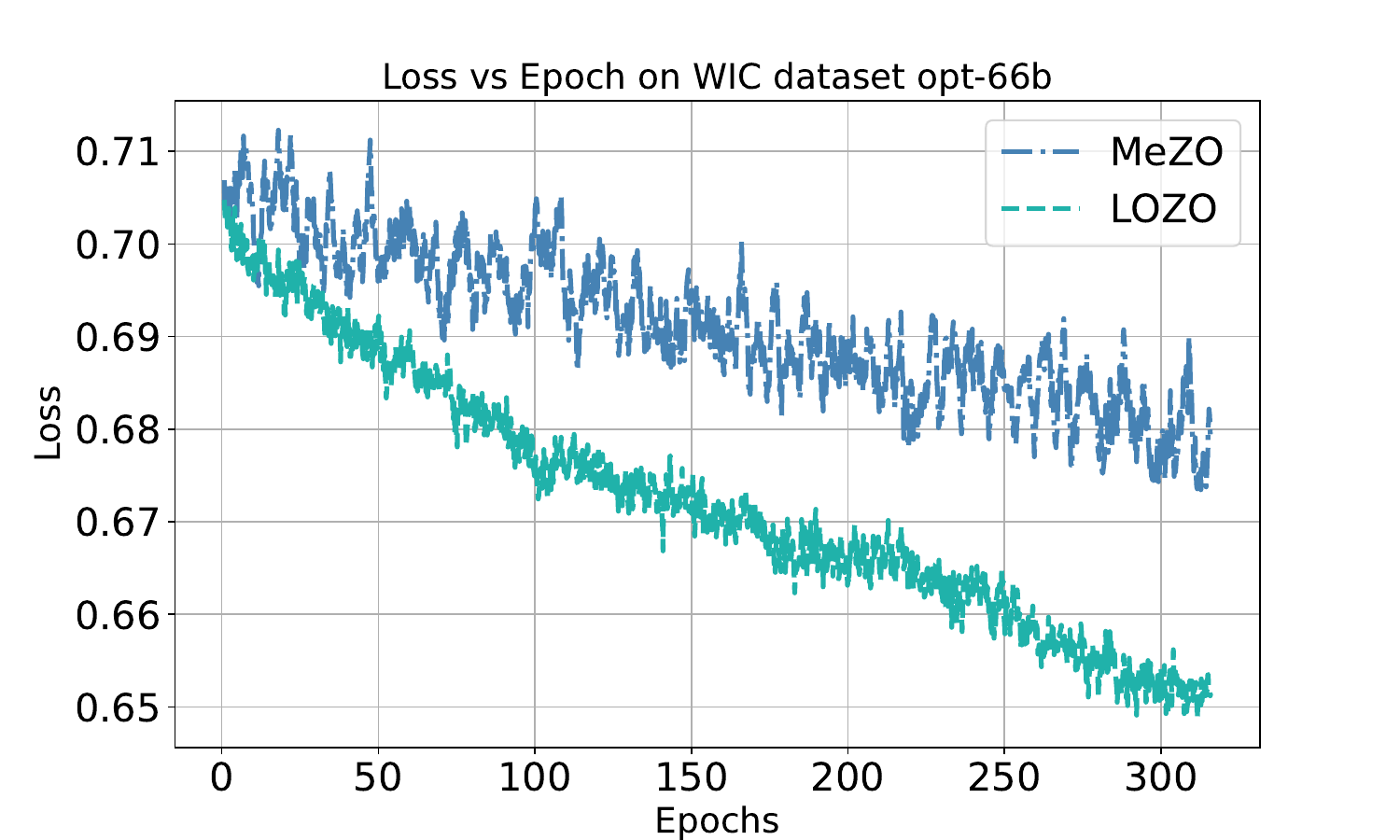}
    \end{minipage}
\caption{\small \textit{Left:} Loss curves of OPT-13B on SQuAD dataset. \textit{Middle:} Loss curves of OPT-30B on SST-2 dataset. \textit{Right:} Loss curves of OPT-66B on WIC dataset.}
\label{fig:opt-loss}
\end{figure} 

\section{Conclusion and Limitations}
This paper introduces the LOZO and LOZO-M algorithms, which are novel zeroth-order (ZO) methods for fine-tuning language models. Specifically, the LOZO algorithm employs a gradient estimator with a low-rank structure, closely mirroring the true gradient in first-order (FO) methods. We further demonstrate that the LOZO algorithm is equivalent to a ZO subspace method, forming the basis for our convergence results. By combining LOZO with the commonly used momentum technique, we develop the LOZO-M algorithm, which incurs almost no additional memory overhead. Both LOZO and LOZO-M achieve improved performance compared to the vanilla ZO-SGD method. 

One limitation of our work is the challenge of designing a method that integrates LOZO with the Adam optimizer without incurring additional memory costs. Additionally, minor fluctuations in the loss are observed towards the end of the training process, potentially due to the lazy sampling strategy. Addressing these issues is left for future work.

\bibliographystyle{unsrtnat}
\bibliography{references}

\appendix

\appendix

\section{More Comments}
\subsection{Unbiaseness of LGE}
In the proposed LOZO algorithm, we employ LGE to approximate the gradient. The following proposition demonstrates that the LGE scheme is unbiased as \( \epsilon \to 0 \).
\begin{proposition} \label{prop:unbiased}
    If $F(\bX;\xi)$ is differentiable with respect to $\bX$, the LGE in (\ref{eqa-grad-approx-part-lowrank}) is an unbiased estimator of $\nabla_{X_\ell} F(\bX; \xi)$ when $\epsilon \rightarrow 0$. 
\end{proposition}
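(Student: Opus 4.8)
The plan is to take the $\epsilon \to 0$ limit inside the finite difference first, and then evaluate the Gaussian expectation over $\bU$ and $\bV$ explicitly; the scaling factor $1/r_\ell$ will emerge precisely as the normalization needed for unbiasedness. First, since $F(\cdot;\xi)$ is differentiable, for each fixed realization of $\bU$ and $\bV$ the symmetric finite difference converges to the directional derivative along $\bU\bV^T$:
\[
\lim_{\epsilon \to 0}\frac{F(\bX + \epsilon \bU\bV^T;\xi) - F(\bX - \epsilon \bU\bV^T;\xi)}{2\epsilon} = \sum_{j=1}^{\cL} \langle \nabla_{X_j} F(\bX;\xi), U_j V_j^T\rangle,
\]
where $\langle \cdot,\cdot\rangle$ is the Frobenius inner product and the sum over $j$ appears because the perturbation is applied to every layer simultaneously. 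Interchanging this limit with the expectation over $(\bU,\bV)$ — justified by differentiability together with a dominated-convergence argument — reduces the claim to showing
\[
\mathbb{E}_{\bU,\bV}\!\left[\Big(\sum_{j=1}^{\cL}\langle \nabla_{X_j}F(\bX;\xi),U_jV_j^T\rangle\Big)\frac{U_\ell V_\ell^T}{r_\ell}\right] = \nabla_{X_\ell}F(\bX;\xi).
\]

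Next I would eliminate the cross-layer terms. For $j \neq \ell$ the pairs $(U_j,V_j)$ and $(U_\ell,V_\ell)$ are independent, and since $U_j,V_j$ are zero-mean and mutually independent we have $\mathbb{E}[U_j V_j^T] = \mathbb{E}[U_j]\,\mathbb{E}[V_j]^T = 0$; hence every $j\neq\ell$ contribution factorizes and vanishes. Writing $G := \nabla_{X_\ell}F(\bX;\xi)$ and denoting the columns of $U_\ell$ and $V_\ell$ by $u_1,\dots,u_{r_\ell}$ and $v_1,\dots,v_{r_\ell}$, so that $U_\ell V_\ell^T = \sum_{k} u_k v_k^T$ and $\langle G, U_\ell V_\ell^T\rangle = \sum_k u_k^T G v_k$, the surviving term becomes $\frac{1}{r_\ell}\sum_{k,l}\mathbb{E}[(u_k^T G v_k)\,u_l v_l^T]$. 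The columns $\{u_k\},\{v_k\}$ are jointly independent standard Gaussian vectors, so for $k\neq l$ the pair $(u_l,v_l)$ is independent of $(u_k,v_k)$ and $\mathbb{E}[u_l v_l^T]=0$, making these off-diagonal terms vanish as well.

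Finally I would evaluate the diagonal terms via a second-moment identity. For a single pair $u\sim\mathcal{N}(0,I_{m_\ell})$ and $v\sim\mathcal{N}(0,I_{n_\ell})$ drawn independently, an entrywise computation using $\mathbb{E}[u_c u_a]=\delta_{ca}$ and $\mathbb{E}[v_d v_b]=\delta_{db}$ gives
\[
\mathbb{E}\big[(u^T G v)\,u v^T\big]_{ab} = \sum_{c,d} G_{cd}\,\mathbb{E}[u_c u_a]\,\mathbb{E}[v_d v_b] = G_{ab},
\]
i.e. $\mathbb{E}[(u^T G v)\,uv^T] = G$. Summing this over the $r_\ell$ diagonal indices $k=l$ yields $r_\ell\,G$, and the prefactor $1/r_\ell$ cancels it exactly, leaving $G = \nabla_{X_\ell}F(\bX;\xi)$, which proves the claim. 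The main obstacle is bookkeeping the Gaussian moment computation correctly — in particular verifying that $1/r_\ell$ is the right normalization and that all cross terms drop — rather than any deep analytic difficulty; the only genuinely analytic point is justifying the interchange of the $\epsilon\to 0$ limit with the expectation, which holds under the stated differentiability together with a mild integrability condition enabling dominated convergence.
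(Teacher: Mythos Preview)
Your proposal is correct and follows essentially the same approach as the paper: pass to the $\epsilon\to 0$ limit to obtain the directional derivative $\sum_j\langle \nabla_{X_j}F,U_jV_j^T\rangle$, kill the cross-layer terms by independence, and then compute the remaining Gaussian second moment to recover $\nabla_{X_\ell}F$ with the $1/r_\ell$ normalization. The only cosmetic difference is that the paper carries out the final moment computation entrywise on $U_\ell V_\ell^T$, whereas you first decompose $U_\ell V_\ell^T=\sum_k u_k v_k^T$ into rank-one pieces and use the identity $\mathbb{E}[(u^TGv)\,uv^T]=G$; both arrive at the same place with the same amount of work.
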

\begin{proof}
    For simplicity, we omit the random variable \( \xi \) in this proof. Under the differentiability assumption, for any set of real matrices \( \{\Delta X_\ell\}_{\ell=1}^\cL \) with the same dimensions as \( \bX \), we have
    \begin{equation*}
        \lim_{\epsilon \rightarrow 0} \frac{F(\{X_\ell + \epsilon \Delta X_\ell\}_{\ell=1}^\cL) - F(\bX) - \sum_{\ell=1}^\cL \langle \nabla_{X_\ell} F(\bX), \epsilon \Delta X_\ell \rangle}{\epsilon} = 0. 
    \end{equation*}
    Now, since \( \hat{\nabla}_{X_\ell} F(\bX) \) from (\ref{eqa-grad-approx-part-lowrank}) can be written as
    \begin{equation*}
        \left[ \begin{aligned}
            &\frac{F(\{X_k + \epsilon U_k V_k^T\}_{k=1}^\cL) - F(\bX) - \sum_{k=1}^\cL \langle \nabla_{X_k} F(\bX), \epsilon U_k V_k^T \rangle}{2 \epsilon} \\
            &- \frac{F(\{X_k - \epsilon U_k V_k^T\}_{k=1}^\cL) - F(\bX) - \sum_{k=1}^\cL \langle \nabla_{X_k} F(\bX), -\epsilon U_k V_k^T \rangle}{2 \epsilon} \\
            &+ \frac{1}{\epsilon} \sum_{k=1}^\cL \langle \nabla_{X_k} F(\bX), \epsilon U_k V_k^T \rangle
        \end{aligned} \right]  \frac{U_\ell V_\ell^T}{r_\ell},
    \end{equation*}
    we have
    \begin{equation*}
        \lim_{\epsilon \rightarrow 0} \mathbb{E} \left[ \hat{\nabla}_{X_\ell} F(\bX) \right] = \frac{1}{r_\ell} \mathbb{E} \left[ \sum_{k=1}^\cL \langle \nabla_{X_k} F(\bX), U_k V_k^T \rangle U_\ell V_\ell^T \right].
    \end{equation*}
    Since all elements of \( \{U_\ell, V_\ell\}_{\ell=1}^\cL \) are i.i.d. Gaussian variables, we can further deduce that
    \begin{equation*}
        \lim_{\epsilon \rightarrow 0} \mathbb{E} \left[ \hat{\nabla}_{X_\ell} F(\bX) \right] = \frac{1}{r_\ell} \mathbb{E} \left[ \langle \nabla_{X_\ell} F(\bX), U_\ell V_\ell^T \rangle U_\ell V_\ell^T \right].
    \end{equation*}
    The element at row \( i \) and column \( j \) of this expression is
    \begin{equation} \begin{aligned}
        &\frac{1}{r_\ell} \mathbb{E} \left[ \langle \nabla_{X_\ell} F(\bX), U_\ell V_\ell^T \rangle U_\ell V_\ell^T \right]_{ij} \\
        &= \frac{1}{r_\ell} \mathbb{E} \left[ \sum_{p=1}^{m_\ell} \sum_{q=1}^{n_\ell} \frac{\partial F(\bX)}{\partial [X_\ell]_{pq}} \sum_{k=1}^{r_\ell} [U_\ell]_{pk} [V_\ell]_{qk} \cdot \sum_{s=1}^{r_\ell} [U_\ell]_{is} [V_\ell]_{js} \right] \\
        &= \frac{1}{r_\ell} \frac{\partial F(\bX)}{\partial [X_\ell]_{ij}} \mathbb{E} \left[ \sum_{k=1}^{r_\ell} [U_\ell]^2_{ik} [V_\ell]^2_{jk} \right] = \frac{\partial F(\bX)}{\partial [X_\ell]_{ij}}.
    \end{aligned} \end{equation}
    Therefore, we conclude that
    \begin{equation*}
        \lim_{\epsilon \rightarrow 0} \mathbb{E} \left[ \hat{\nabla}_{X_\ell} F(\bX) \right] = \nabla_{X_\ell} F(\bX).
    \end{equation*}
\end{proof}

\subsection{Equivalence bewtween LOZO and ZO Subspace Method}\label{app-equivalence}
We present the detailed proof of the equivalence between LOZO algorithm (\ref{Lozo-main-lazy}) and the ZO subspace method (\ref{eqa-zo-subspace}). Let \( \bX^t \) be the \( t \)-th iteration point of LOZO, and let \( \tilde{\bX}^{(k)} \) be the \( k \)-th iteration point of the outer loop in the ZO subspace method (\ref{eqa-zo-subspace-outer}). We now show that \( \bX^{k\nu} = \tilde{\bX}^{(k)} \) holds if the initialization of both algorithms is the same, i.e., \( \bX^0 = \tilde{\bX}^{(0)} \).

We now introduce \( \bY^{(k,s)} := \tilde{\bX}^{(k)} + \bB^{(k,s)} (\bV^{(k)})^T \). When the step size satisfies \( \boldsymbol{\gamma} = \alpha/\br \), using the update rules for \( \bB \) from (\ref{eqa-zo-subspace-inner}) and (\ref{eqa-RGE-B}), we can derive the update rule for \( \bY \), which is:
\begin{align}
\bY^{(k,s+1)}\hspace{-1mm} &= \hspace{-0.5mm}\bY^{(k,s)}\hspace{-0.5mm} - \boldsymbol{\gamma} \frac{F(\bY^{(k,s)}\hspace{-0.5mm} + \epsilon \bU^{(k,s)} (\bV^{(k)})^T)\hspace{-0.5mm} -\hspace{-0.5mm} F(\bY^{(k,s)}\hspace{-0.5mm} - \epsilon \bU^{(k,s)} (\bV^{(k)})^T)}{2\epsilon} \bU^{(k,s)} (\bV^{(k)})^T \nonumber \\
&= \hspace{-0.5mm}\bY^{(k,s)}\hspace{-0.5mm} - \alpha \cdot \mathrm{LGE}(\bY^{(k,s)}, \bU^{(k,s)}, \bV^{(k)}, \br, \epsilon, \xi^{(k,s)}), \quad \forall s \in \{0,\cdots, \nu-1\}.
\end{align}
This update rule for \( \bY \) aligns with that for \( \bX \) in the LOZO algorithm. Thus, once \( \bY^{(k,0)} = \tilde{\bX}^{(k)} = \bX^{k\nu} \), it follows that \( \bY^{(k,s)} = \tilde{\bX}^{(k)} + \bB^{(k,s)} (\bV^{(k)})^T = \bX^{k\nu+s} \) for all \( s \in \{0, 1, \dots, \nu\} \). Therefore, \( \bY^{(k,\nu)} = \tilde{\bX}^{(k)} + \bB^{(k,\nu)} (\bV^{(k)})^T = \tilde{\bX}^{(k+1)} = \bX^{(k+1)\nu} \). Thus, the relation \( \tilde{\bX}^{(k)} = \bX^{k\nu} \) holds for any \( k \), given that \( \tilde{\bX}^{(0)} = \bX^0 \).

\subsection{Details of LOZO-M}\label{app-lozom-detail}
We present the detailed LOZO-M algorithm in Algorithm \ref{alg-lowrank-zom}. Compared to the LOZO algorithm, the main difference is the addition of the momentum term update.

\begin{algorithm}[!htbp]
\caption{Low-rank ZO-SGD with Momentum (LOZO-M)}
\label{alg-lowrank-zom}
\KwIn{parameters $\bX$, loss function $F(\bX;\xi)$, step budget $T$, perturbation scale $\epsilon$, learning rate $\alpha$, momentum parameter $\beta$, sample interval $\nu$ and rank $\{r_\ell\}$.}
\ForEach{$X_\ell \in \bX$}{
    $N_\ell \leftarrow 0$ \tcp*{Initialize momentum}
}
\For{$t = 0, \ldots, T-1$}{
    \ForEach{$X_\ell \in \bX$}{
        Sample $U_\ell \in \mathbb{R}^{m_\ell \times r_\ell}$ from the standard normal distribution\;
        \If{$t \bmod \nu = 0$}{
            $M_\ell \leftarrow N_\ell V_\ell^T$\;
            Sample $V_\ell \in \mathbb{R}^{n_\ell \times r_\ell}$ from the standard normal distribution; \tcp*{Resample $V_\ell$} 
            $N_\ell \leftarrow \frac{1}{n_\ell} M_\ell V_\ell$; \tcp*{Project momentum onto the new subspace}
        }
    }
    $\bX \gets \text{Perturbation}(\bX, \epsilon, \{U_\ell, V_\ell\})$\;
    $F_+ \gets F(\bX; \xi)$\;
    $\bX \gets \text{Perturbation}(\bX, -2\epsilon, \{U_\ell, V_\ell\})$\;
    $F_- \gets F(\bX; \xi)$\;
    $\bX \gets \text{Perturbation}(\bX, \epsilon, \{U_\ell, V_\ell\})$; \tcp*{Reset parameters}
    $c \gets (F_+ - F_-)/2\epsilon$ \tcp*{Calculate finite difference}
    \ForEach{$X_\ell \in \bX$}{
        $N_\ell \leftarrow \beta N_\ell + (1-\beta) \cdot cU_\ell$; \tcp*{Update momentum in place}
        $X_\ell \gets X_\ell - \alpha (N_\ell V_\ell^T/r_\ell)$; \tcp*{Update parameters in place}
    }
}
\SetKwFunction{Perturb}{Perturbation}
\SetKwProg{Fn}{Function}{:}{}
\Fn{\Perturb{$\bX, \epsilon, \{U_\ell, V_\ell\}$}}{
    \ForEach{$X_\ell \in \bX$}{
        $X_\ell \gets X_\ell + \epsilon U_\ell V_\ell^T$; \tcp*{Modify parameters in place}
    }
    \Return{$\bX$}\;
}
\end{algorithm}

\section{Convergence Analysis}\label{app-proof}
In this section, we present the convergence analysis of the LOZO algorithm and provide a detailed proof of Theorem~\ref{thm-zo-lo}. Without loss of generality, we focus on the case where the number of layers is \( \cL = 1 \). Consequently, the problem (\ref{prob-opt}) reduces to \(\min_{X} f(X):= \mathbb{E}_{\xi}[F(X;\xi)]\), where \( X \in \mathbb{R}^{m \times n} \). To simplify the notation, we define the following terms:
\begin{align*}
&G_{X,V}(B;\xi) := F(X+BV^T;\xi), \quad g_{X,V}(B) := f(X+BV^T),\\
&\hat{\nabla} G_{X,V}(B;\xi) := \frac{G_{X,V}(B+\epsilon U) - G_{X,V}(B-\epsilon U)}{2\epsilon}U.
\end{align*}
With these definitions, the subspace minimization problem (\ref{subspace-B}) can be reformulated as follows:
\begin{equation}\label{prob-opt-B}
\begin{aligned}
\min\limits_{B} g_{X,V}(B) = \mathbb{E}_{\xi}[G_{X,V}(B;\xi)].
\end{aligned}
\end{equation}
As demonstrated in Section \ref{sec-illustration}, our proposed LOZO algorithm is equivalent to solving the subproblem (\ref{prob-opt-B}) using ZO-SGD with the LGE scheme for \( \nu \) steps, followed by an update of the weight matrix. Specifically, by applying the following update rule:
\begin{subequations}\label{eqa-zo-update-simple}
\begin{align}
B^{(k,s+1)} &= B^{(k,s)} - \frac{\alpha}{r} \hat{\nabla} G_{\tilde{X}^{(k)}, V^{(k)}}(B^{(k,s)}; \xi^{(k,s)}), \quad \forall s \in \{0, 1, \dots, \nu-1\}, \label{eqa-zo-update-simple-B} \\
\tilde{X}^{(k+1)} &= \tilde{X}^{(k)} + B^{(k,\nu)} (V^{(k)})^T, \label{eqa-zo-update-simple-X}
\end{align}
\end{subequations}

it follows that \( X^{k\nu} = \tilde{X}^{(k)} \) for any \( k \), where \( X^{k\nu} \) represents the \( k\nu \)-th iteration point of the LOZO algorithm. For the remainder of the proof, we will use \( X^{k\nu} \) in place of \( \tilde{X}^{(k)} \) in (\ref{eqa-zo-update-simple}).

Under Assumption \ref{ass-orthogonal}, the following properties hold, which can be straightforwardly derived:
\begin{align*}
    \|A V^T\|_F &= \sqrt{\mathrm{tr}(A V^T V A^T)} = \sqrt{n} \|A\|_F, \\
    \|V\|_2 &= \sqrt{\|V^T V\|_2} = \sqrt{n},
\end{align*}
where \( A \in \mathbb{R}^{m \times n} \) is any matrix. For simplicity, we will not explicitly reference these properties when they are used.

To construct the convergence result, our analysis is divided into two parts. First, we analyze the convergence of ZO-SGD (\ref{eqa-zo-update-simple-B}) for solving (\ref{prob-opt-B}) with fixed \( X \) and \( V \). Next, we assess the impact of updating \( X \) and resampling \( V \), and establish the global convergence result for LOZO algorithm. 

To begin the first part, we introduce some preliminary lemmas. All of these lemmas assume fixed \( X \) and \( V \), so we will omit the subscripts of \( g_{X,V}(B) \) and \( G_{X,V}(B;\xi) \) when there is no risk of confusion. The following lemma establishes the desirable properties of the objective function in (\ref{prob-opt-B}), which are necessary for the convergence of the iteration given in (\ref{eqa-zo-update-simple-B}).

\begin{lemma}\label{lem-obj-property}
Under Assumptions \ref{ass-smooth}and \ref{ass-orthogonal}, the following properties hold:
\begin{itemize}
    \item The function \( G_{X,V}(B;\xi) \) is uniformly \(\tilde{L}\)-smooth with a constant \(\tilde{L} = nL\).
    \item \( \nabla G_{X,V}(B;\xi) \) is an unbiased estimator of \( \nabla g_{X,V}(B) \), and its variance is bounded by
    \[
    \mathbb{E}\|\nabla G_{X,V}(B;\xi) - \nabla g_{X,V}(B)\|_F^2 \le \tilde{\sigma}^2,
    \]
    where \( \tilde{\sigma}^2 = n\sigma^2 \).
\end{itemize}
\end{lemma}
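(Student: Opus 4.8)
The plan is to reduce both claims to a single matrix chain-rule identity for the composite map $B \mapsto G_{X,V}(B;\xi) = F(X+BV^T;\xi)$, and then push Assumption \ref{ass-smooth} through it using the two norm identities $\|AV^T\|_F = \sqrt{n}\|A\|_F$ and $\|V\|_2 = \sqrt{n}$ recorded just above the lemma. Both of these follow from the deterministic relation $V^T V = nI$ in Assumption \ref{ass-orthogonal}; note that the expectation identity $\mathbb{E}[VV^T] = rI$ is \emph{not} needed here, since $V$ is held fixed throughout this lemma. First I would establish that $\nabla_B G_{X,V}(B;\xi) = \nabla F(X+BV^T;\xi)\,V$: for any perturbation direction $\Delta$, the directional derivative equals $\langle \nabla F(X+BV^T;\xi), \Delta V^T\rangle = \langle \nabla F(X+BV^T;\xi)\,V, \Delta\rangle$, where the last equality uses the adjoint relation $\langle A, \Delta V^T\rangle = \langle AV, \Delta\rangle$. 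Running the identical computation with $f$ in place of $F$ yields $\nabla g_{X,V}(B) = \nabla f(X+BV^T)\,V$.

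For the smoothness claim, I would bound, for arbitrary $B_1,B_2$,
\[
\|\nabla_B G(B_1;\xi) - \nabla_B G(B_2;\xi)\|_F = \|(\nabla F(X+B_1V^T;\xi) - \nabla F(X+B_2V^T;\xi))\,V\|_F \le \|V\|_2\,\|\nabla F(X+B_1V^T;\xi) - \nabla F(X+B_2V^T;\xi)\|_F.
\]
Applying the $L$-Lipschitz bound of Assumption \ref{ass-smooth} to the last factor gives $L\,\|(B_1-B_2)V^T\|_F$, and then substituting $\|(B_1-B_2)V^T\|_F = \sqrt{n}\,\|B_1-B_2\|_F$ together with $\|V\|_2 = \sqrt{n}$ produces the constant $\tilde L = nL$.

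For the unbiasedness-and-variance claim, unbiasedness is immediate: taking $\mathbb{E}_\xi$ of the chain-rule identity and using $\mathbb{E}[\nabla F] = \nabla f$ yields $\mathbb{E}_\xi[\nabla_B G(B;\xi)] = \nabla f(X+BV^T)\,V = \nabla g(B)$. For the variance, writing $W := \nabla F(X+BV^T;\xi) - \nabla f(X+BV^T)$ so that the estimation error is exactly $WV$, I would compute
\[
\mathbb{E}\|\nabla_B G(B;\xi) - \nabla g(B)\|_F^2 = \mathbb{E}\|WV\|_F^2 \le \|V\|_2^2\,\mathbb{E}\|W\|_F^2 \le n\sigma^2,
\]
invoking $\|V\|_2^2 = n$ and the variance bound $\mathbb{E}\|W\|_F^2 \le \sigma^2$ of Assumption \ref{ass-smooth}, which gives $\tilde\sigma^2 = n\sigma^2$.

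There is no substantial obstacle: the lemma is essentially a direct consequence of the chain rule combined with the two norm identities. The only point requiring genuine care is tracking how the dimension factor $n$ enters—quadratically in $\tilde L = nL$ (once through $\|V\|_2 = \sqrt n$ in the gradient-map bound and once through $\|(B_1-B_2)V^T\|_F = \sqrt n\,\|B_1-B_2\|_F$), but only linearly in $\tilde\sigma^2 = n\sigma^2$. This asymmetric dimension dependence is precisely what the subsequent convergence analysis of \eqref{eqa-zo-update-simple-B} must absorb, and it is the mechanism by which restricting to a fixed subspace inflates the effective smoothness and noise constants relative to the original problem.
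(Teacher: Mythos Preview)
Your proposal is correct and follows essentially the same approach as the paper: both derive the chain-rule identity $\nabla_B G_{X,V}(B;\xi) = \nabla F(X+BV^T;\xi)\,V$, then bound the smoothness via $\|V\|_2 = \sqrt{n}$ and $\|(B_1-B_2)V^T\|_F = \sqrt{n}\,\|B_1-B_2\|_F$, and the variance via $\|V\|_2^2 = n$ applied to $\mathbb{E}\|(\nabla F - \nabla f)V\|_F^2$. Your write-up is slightly more explicit about the adjoint step and the unbiasedness claim, but the argument is the same.
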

\begin{proof}
Given any \( \xi \), since \( F(X, \xi) \) is differentiable and \( X + UV^T \) is a linear function of \( U \), the function \( G_{X,V}(U; \xi) \) is also differentiable, and we have:
    \begin{equation*} 
    \begin{aligned}
        \nabla G_{X,V}(B,\xi) = \nabla F(X + BV^T)V.
    \end{aligned} 
    \end{equation*}
    
    Thus, it follows that:
    \begin{align*}
        \|\nabla G_{X,V}(B_1,\xi) - \nabla G_{X,V}(B_2,\xi)\|_F 
        &= \|\nabla F(X + B_1 V^T, \xi)V - \nabla F(X + B_2 V^T, \xi)V\|_F \\
        &\leq \|\nabla F(X + B_1 V^T, \xi) - \nabla F(X + B_2 V^T, \xi)\|_F \|V\|_2 \\
        &\leq L \|(B_1 - B_2) V^T\|_F \|V\|_2 \\
        &\leq n L \|B_1 - B_2\|_F.
    \end{align*}

    The second property holds because 
    \[ \begin{aligned}
        &\mathbb{E}_{\xi} \|\nabla G_{X,V}(U;\xi) - \nabla g_{X,V}(U)\|_F^2 \\
        &= \mathbb{E}_{\xi} \big\|\big[\nabla F(X+UV^T;\xi) - \nabla f(X+UV^T)\big] V \big\|_F^2 \\
        &\le \mathbb{E}_{\xi} \big\|\nabla F(X+UV^T;\xi) - \nabla f(X+UV^T)\big\|_F^2 \|V\|_2^2 \\
        &\le \sigma^2 \|V\|_2^2 = n \sigma^2. 
    \end{aligned} \] 
\end{proof}

The following lemma, which bounds the second moment of the gradient estimator, is necessary for proving the convergence of ZO-SGD.
\begin{lemma}\label{lem-estimator}
For the gradient estimator \( \hat\nabla G_{X,V}(B;\xi) \), the following bound holds:
\[
\mathbb{E}_{U}\|\hat{\nabla} G_{X,V}(B;\xi)\|_F^2 \le 6mr\|\nabla G_{X,V}(B;\xi)\|_F^2 + 64\tilde L^2m^3r^3\eps^2.
\]
\end{lemma}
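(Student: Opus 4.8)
The plan is to reduce the second-moment bound to two explicit Gaussian moment evaluations after first isolating the error introduced by the finite-difference approximation. First I would invoke the $\tilde L$-smoothness of $G_{X,V}(\cdot;\xi)$ from Lemma \ref{lem-obj-property}: applying the quadratic upper and lower bounds at both $B+\eps U$ and $B-\eps U$ and subtracting gives
\[
\left| \frac{G_{X,V}(B+\eps U;\xi) - G_{X,V}(B-\eps U;\xi)}{2\eps} - \langle \nabla G_{X,V}(B;\xi), U\rangle \right| \le \frac{\tilde L \eps}{2}\|U\|_F^2.
\]
Writing the scalar finite difference as $c = \langle \nabla G_{X,V}(B;\xi), U\rangle + e$ with $|e| \le \tfrac{\tilde L\eps}{2}\|U\|_F^2$, and using $\|\hat\nabla G_{X,V}(B;\xi)\|_F^2 = c^2\|U\|_F^2$ together with $c^2 \le 2\langle\nabla G_{X,V}(B;\xi),U\rangle^2 + 2e^2$, I obtain the pointwise bound
\[
\|\hat\nabla G_{X,V}(B;\xi)\|_F^2 \le 2\langle\nabla G_{X,V}(B;\xi),U\rangle^2\|U\|_F^2 + \frac{\tilde L^2\eps^2}{2}\|U\|_F^6.
\]

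Next I would take the expectation over the Gaussian matrix $U\in\mathbb{R}^{m\times r}$ (i.i.d.\ $N(0,1)$ entries, hence effectively a vector of dimension $mr$) and handle the two moments separately. Writing $g := \nabla G_{X,V}(B;\xi)$ and expanding $\langle g,U\rangle^2\|U\|_F^2 = \sum_{ij,kl,pq} g_{ij}g_{kl}U_{ij}U_{kl}U_{pq}^2$, Isserlis' (Wick's) theorem shows that only the terms with $(i,j)=(k,l)$ survive, yielding the exact value $\mathbb{E}[\langle g,U\rangle^2\|U\|_F^2] = (mr+2)\|g\|_F^2$; since $2(mr+2)\le 6mr$ for $mr\ge 1$, the first term is bounded by $6mr\|\nabla G_{X,V}(B;\xi)\|_F^2$. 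For the second moment, $\|U\|_F^2$ is a $\chi^2_{mr}$ variable, so $\mathbb{E}[\|U\|_F^6] = mr(mr+2)(mr+4) \le 15\,m^3r^3$, and $\tfrac12\cdot 15 \le 64$ absorbs this into $64\tilde L^2 m^3 r^3\eps^2$, completing the claim.

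The hard part will be the exact evaluation of $\mathbb{E}[\langle g,U\rangle^2\|U\|_F^2]$: one must carefully track which fourth-order Gaussian pairings contribute, verifying that every cross term with $(i,j)\ne(k,l)$ vanishes and that the surviving diagonal sum $\sum_{ij} g_{ij}^2\sum_{pq}\mathbb{E}[U_{ij}^2 U_{pq}^2]$ produces exactly the factor $mr+2$ (from one quartic term contributing $3$ and the remaining $mr-1$ terms contributing $1$ each). Getting this coefficient right is precisely what makes the clean estimate $2(mr+2)\le 6mr$ yield the stated first term; a careless expansion would either lose the sharp constant or introduce spurious off-diagonal dependence on $g$. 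By comparison, the $\chi^2$ moment and the constant-chasing for the error term are routine once the main computation is in place.
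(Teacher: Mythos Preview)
Your proposal is correct and essentially identical to the paper's proof: the paper also decomposes via $\|\hat\nabla G\|_F^2 \le 2\|\langle \nabla G,U\rangle U\|_F^2 + 2\|(c-\langle\nabla G,U\rangle)U\|_F^2$, bounds the error term using the same smoothness inequality to get $\tfrac{\tilde L^2\eps^2}{4}\mathbb{E}\|U\|_F^6$, and evaluates $\mathbb{E}\|\langle\nabla G,U\rangle U\|_F^2 = (mr+2)\|\nabla G\|_F^2$ exactly as you do. The only cosmetic difference is that the paper bounds $\mathbb{E}\|U\|_F^6 = mr(mr+2)(mr+4)$ by $(mr+4)^3$ rather than by $15\,m^3r^3$, but both routes land comfortably inside the stated constant $64$.
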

\begin{proof}
The triangle inequality provides the following bound:
\begin{align*}
\mathbb{E}_{U}\|\hat{\nabla} G(B;\xi)\|_F^2 \le 2\mathbb{E}_{U}\|\hat{\nabla} G(B;\xi) - \langle \nabla G(B;\xi), U \rangle U\|_F^2 + 2\mathbb{E}_U\|\langle \nabla G(B;\xi), U\rangle U\|_F^2.
\end{align*}
To bound the first term, we use Assumption \ref{ass-smooth}, leading to the following inequalities:
\begin{align*}
    |G(B+\epsilon U;\xi) - G(B;\xi) - \epsilon \langle \nabla G(B;\xi), U \rangle| &\le \frac{\tilde L\epsilon^2}{2} \|U\|_F^2,\\
    |G(B-\epsilon U;\xi) - G(B;\xi) + \epsilon \langle \nabla G(B;\xi), U \rangle| &\le \frac{\tilde L\epsilon^2}{2} \|U\|_F^2.
\end{align*}
Combining these two inequalities, we obtain:
\begin{align*}
\Big|\frac{G(B+\epsilon U;\xi) - G(B-\epsilon U;\xi)}{2\epsilon} - \langle \nabla G(B;\xi), U \rangle\Big| \le \frac{\tilde L\epsilon}{2} \|U\|_F^2.
\end{align*}
Multiplying by \(U\) on both sides and taking the expectation with respect to \(U\) yields:
\begin{align*}
\mathbb{E}_{U}\|\hat{\nabla} G(B;\xi) - \langle \nabla G(B;\xi), U \rangle U\|_F^2 \le \frac{\tilde L^2\epsilon^2}{4}\EE_U\|U\|_F^6 \le \frac{\tilde L^2(mr+4)^3\epsilon^2}{4} \le 32\tilde L^2m^3r^3\eps^2.
\end{align*}
In the final inequality, we use the fact \(\mathbb{E}_U\|U\|_F^6 = mr(mr+2)(mr+4)\). The second term can be calculated directly, giving us:
\begin{align*}
\mathbb{E}_U\|\langle \nabla G(B;\xi), U \rangle U\|_F^2 = (mr+2)\|\nabla G(B;\xi)\|_F^2 \le 3mr\|\nabla G(B;\xi)\|_F^2.
\end{align*}
Combining these two inequalities completes the proof.
\end{proof}

We now introduce the Gaussian smoothing function as follows:
\begin{align*}
g_{X,V}^\epsilon(B) := \mathbb{E}_U[g_{X,V}(B + \epsilon U)] = \frac{1}{\kappa}\int g_{X,V}(B + \epsilon U)e^{-\frac{1}{2}\|U\|_F^2}dU.
\end{align*}
The following lemma outlines several properties of the Gaussian smoothing function.
\begin{lemma}[Section 2 in \citep{nesterov2017random}]\label{lem-gaussian-smoothing}
For the Gaussian smoothing function \( g_{X,V}^\epsilon(B) \), the following properties hold:
\begin{itemize}
    \item \(\mathbb{E}_{U,\xi} [\hat{\nabla} G_{X,V}(B;\xi)] = \nabla g_{X,V}^\epsilon(B)\).
    \item \( g_{X,V}^\epsilon(B) \) is \( \tilde{L} \)-smooth.
    \item \( |g_{X,V}^\epsilon(B) - g_{X,V}(B)| \le \frac{\tilde{L}mr\epsilon^2}{2} \).
    \item \(\|\nabla g_{X,V}^\epsilon(B) - \nabla g_{X,V}(B)\|_F^2 \le \tilde L^2mr\eps^2\).
\end{itemize}
\end{lemma}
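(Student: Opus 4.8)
The statement I must establish is Lemma \ref{lem-gaussian-smoothing}, which collects four standard Gaussian-smoothing facts for the surrogate objective $g_{X,V}^\epsilon(B)=\mathbb{E}_U[g_{X,V}(B+\epsilon U)]$, where $U$ has i.i.d.\ standard normal entries of dimension $m\times r$. The reference \citep{nesterov2017random} is cited, so the cleanest route is to recognize that $g_{X,V}$ plays the role of the base objective and that, by Lemma \ref{lem-obj-property}, it is $\tilde L$-smooth with $\tilde L = nL$ and has dimension $mr$ (the number of entries of $B\in\mathbb{R}^{m\times r}$). I would adapt Nesterov--Spokoiny's results for $\mathbb{R}^d$ smoothing to the matrix setting by identifying $B$ and $U$ with their vectorizations in $\mathbb{R}^{mr}$, so that $d\mapsto mr$ throughout. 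I expect the main technical care (rather than a deep obstacle) to be bookkeeping the effective dimension $mr$ and the smoothness constant $\tilde L$ consistently across the four claims.

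\textbf{Claim 1 (gradient is the smoothed gradient).} First I would verify that $\mathbb{E}_{U,\xi}[\hat\nabla G_{X,V}(B;\xi)]=\nabla g_{X,V}^\epsilon(B)$. Writing $\hat\nabla G(B;\xi)=\frac{G(B+\epsilon U;\xi)-G(B-\epsilon U;\xi)}{2\epsilon}U$, I take expectation over $\xi$ first (using that $\mathbb{E}_\xi G_{X,V}(\cdot;\xi)=g_{X,V}(\cdot)$ by definition) to reduce to $\mathbb{E}_U\big[\frac{g(B+\epsilon U)-g(B-\epsilon U)}{2\epsilon}U\big]$. By the symmetry $U\mapsto -U$ of the Gaussian measure, this equals $\mathbb{E}_U\big[\frac{g(B+\epsilon U)}{\epsilon}U\big]$, and then the standard Stein-type / integration-by-parts identity for Gaussian densities (differentiating under the integral sign in the definition of $g^\epsilon$, exploiting $\nabla_U e^{-\|U\|_F^2/2}=-U\,e^{-\|U\|_F^2/2}$) yields exactly $\nabla g_{X,V}^\epsilon(B)$. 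This requires justifying differentiation under the integral, which holds because $g_{X,V}$ has a Lipschitz gradient and the Gaussian weight decays fast enough.

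\textbf{Claims 2--4 (smoothness and approximation bounds).} For the second bullet, $\tilde L$-smoothness of $g^\epsilon$ follows from $g^\epsilon$ being an average of translates $g(\,\cdot+\epsilon U)$ of a $\tilde L$-smooth function, and averaging preserves the uniform Lipschitz modulus of the gradient. For the third bullet I would bound $|g^\epsilon(B)-g(B)|=|\mathbb{E}_U[g(B+\epsilon U)-g(B)]|$ by the $\tilde L$-smoothness descent inequality $|g(B+\epsilon U)-g(B)-\langle\nabla g(B),\epsilon U\rangle|\le \frac{\tilde L\epsilon^2}{2}\|U\|_F^2$; taking expectations kills the linear term by symmetry and leaves $\frac{\tilde L\epsilon^2}{2}\mathbb{E}_U\|U\|_F^2=\frac{\tilde L mr\epsilon^2}{2}$, since $\mathbb{E}_U\|U\|_F^2=mr$. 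For the fourth bullet I would write $\nabla g^\epsilon(B)-\nabla g(B)=\mathbb{E}_U[\nabla g(B+\epsilon U)-\nabla g(B)]$, apply Jensen together with $\tilde L$-Lipschitzness of $\nabla g$ to get $\|\nabla g^\epsilon(B)-\nabla g(B)\|_F^2\le \tilde L^2\epsilon^2\,\mathbb{E}_U\|U\|_F^2=\tilde L^2 mr\epsilon^2$. The only genuine subtlety is that I should present these with the matrix Frobenius norm while borrowing the vectorized Nesterov--Spokoiny estimates; since all quantities ($\|U\|_F$, inner products, smoothness) are invariant under vectorization, the translation is transparent, and no step presents a serious obstacle beyond the careful substitution $d=mr$ and $\tilde L=nL$.
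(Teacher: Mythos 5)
Your proposal is correct and, on the only bullet the paper actually proves in-house (the fourth), your argument is identical to the paper's: write $\nabla g^\epsilon(B)-\nabla g(B)=\mathbb{E}_U[\nabla g(B+\epsilon U)-\nabla g(B)]$, apply Jensen's inequality, then $\tilde L$-Lipschitzness of $\nabla g$ and $\mathbb{E}_U\|U\|_F^2=mr$. For the first three bullets the paper simply defers to \citep{nesterov2017random}, and your self-contained derivations (symmetrization of the two-point difference plus Gaussian integration by parts for the gradient identity, averaging of translates for $\tilde L$-smoothness, and the descent lemma with the linear term killed by symmetry for the function-value gap) are precisely the standard arguments behind that citation, with the correct bookkeeping $d=mr$ and $\tilde L=nL$.
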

\begin{proof}
We only prove the last claim. The remaining claims and their proofs can be found in \citep{nesterov2017random}.
\begin{align*}
\|\nabla g^\epsilon(B) - \nabla g(B)\|_F^2 &= \|\mathbb{E}_U (\nabla g(B+\epsilon U) - \nabla g(B))\|_F^2 \\
&\leq \mathbb{E}_U \|\nabla g(B+\epsilon U) - \nabla g(B)\|_F^2 \\
&\leq \tilde{L}^2\epsilon^2\mathbb{E}_U\|U\|_F^2 = \tilde{L}^2mr\epsilon^2.
\end{align*}
In the first inequality, we apply Jensen's inequality, and the second inequality derives from the \(\tilde L\)-smoothness of \(g(B)\).
\end{proof}

Now we are able to establish the convergence for solving problem (\ref{prob-opt-B}) using the update rule (\ref{eqa-zo-update-simple-B}). The following lemma bounds the expected difference between any \( B^t \) and \( B^0 \).
\begin{lemma}\label{lem-e}
If the step size condition \( \frac{16\tilde{L}^2m\nu\alpha^2}{r} \leq \frac{1}{\nu-1} \) is satisfied, then for any \( t \leq \nu \), the following inequality holds:
\begin{align*}
\mathbb{E}_{U,\xi}\|B^t - B^0\|_F^2 \leq \frac{64m\nu^2\alpha^2}{r} \mathbb{E}_{U,\xi}\|\nabla g(B^0)\|_F^2 + \frac{24m\nu\alpha^2\tilde \sigma^2}{r} + 264\tilde{L}^2m^3r\epsilon^2\nu^2\alpha^2.
\end{align*}
\end{lemma}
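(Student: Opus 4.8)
The plan is to work within the lemma's setting, where $X$ and $V$ are held fixed, and analyze the inner ZO-SGD recursion (\ref{eqa-zo-update-simple-B}), namely $B^{t+1} = B^t - \frac{\alpha}{r}\hat{\nabla} G(B^t;\xi^t)$, relative to the anchor $B^0$. Writing $\Delta^t := B^t - B^0 = -\frac{\alpha}{r}\sum_{i=0}^{t-1}\hat{\nabla} G(B^i;\xi^i)$, I would split each estimator into its conditional mean and a mean-zero fluctuation. By the first property of Lemma \ref{lem-gaussian-smoothing}, $\mathbb{E}[\hat{\nabla} G(B^i;\xi^i)\mid\mathcal{F}_i] = \nabla g^\epsilon(B^i)$, where $\mathcal{F}_i$ is the filtration generated by the past $\{U^j,\xi^j\}_{j<i}$ and $B^i$ is $\mathcal{F}_i$-measurable; hence $\zeta^i := \hat{\nabla} G(B^i;\xi^i) - \nabla g^\epsilon(B^i)$ is a martingale-difference sequence. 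Applying $\|a+b\|_F^2 \le 2\|a\|_F^2 + 2\|b\|_F^2$ then separates $\mathbb{E}\|\Delta^t\|_F^2$ into a ``signal'' part $\frac{2\alpha^2}{r^2}\mathbb{E}\|\sum_i\nabla g^\epsilon(B^i)\|_F^2$ and a ``noise'' part $\frac{2\alpha^2}{r^2}\mathbb{E}\|\sum_i\zeta^i\|_F^2$.

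These two parts are bounded differently, and this is the crux of obtaining the correct powers of $\nu$. For the signal part I would use Cauchy–Schwarz, $\|\sum_{i=0}^{t-1}\nabla g^\epsilon(B^i)\|_F^2 \le t\sum_{i=0}^{t-1}\|\nabla g^\epsilon(B^i)\|_F^2$ with $t\le\nu$, which produces the quadratic factor $\nu^2$ appearing on the gradient and perturbation terms. For the noise part the martingale property annihilates all cross terms, giving $\mathbb{E}\|\sum_i\zeta^i\|_F^2 = \sum_i\mathbb{E}\|\zeta^i\|_F^2$, i.e. only a single (linear) factor of $\nu$; this is precisely why the variance term carries $\nu$ rather than $\nu^2$. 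I would then bound each summand with the tools already established: $\mathbb{E}_U\|\hat{\nabla} G(B^i;\xi^i)\|_F^2$ (hence $\mathbb{E}\|\zeta^i\|_F^2$) via Lemma \ref{lem-estimator}, the residual $\|\nabla g^\epsilon - \nabla g\|_F^2 \le \tilde{L}^2 mr\epsilon^2$ via Lemma \ref{lem-gaussian-smoothing}, the variance $\tilde{\sigma}^2 = n\sigma^2$ via Lemma \ref{lem-obj-property}, and finally $\|\nabla g(B^i)\|_F^2 \le 2\|\nabla g(B^0)\|_F^2 + 2\tilde{L}^2\|\Delta^i\|_F^2$ via the $\tilde{L}$-smoothness of $g$.

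After substitution and collecting constants, the bound on $a_t := \mathbb{E}\|\Delta^t\|_F^2$ becomes self-referential, of the form $a_t \le \rho\max_{0\le s\le\nu}a_s + C$, where $C$ gathers exactly the three target terms (in $\|\nabla g(B^0)\|_F^2$, $\tilde{\sigma}^2$, and $\epsilon^2$) and $\rho$ is proportional to $\frac{\tilde{L}^2 m\nu(\nu-1)\alpha^2}{r}$. Here I would invoke the step-size hypothesis $\frac{16\tilde{L}^2 m\nu\alpha^2}{r}\le\frac{1}{\nu-1}$, which is calibrated so that $\rho\le\tfrac12$. Taking the maximum over $t\le\nu$ and rearranging then yields $\max_t a_t \le 2C$; the factor $2$, together with the elementary bound $\tfrac{1}{r^2}\le\tfrac{m}{r}$ (valid since $mr\ge1$) and $\nu\le\nu^2$, reconciles the intermediate constants with the stated coefficients $64$, $24$, and $264$.

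The main obstacle I anticipate is bookkeeping the $\nu$-dependence correctly: a naive Cauchy–Schwarz applied to the entire sum $\sum_i\hat{\nabla} G(B^i;\xi^i)$ would inflate the variance contribution to $\nu^2\tilde{\sigma}^2$ and violate the stated bound, so the separation of the mean-zero martingale part from the deterministic drift is essential. The secondary technical point is closing the self-referential estimate, i.e. verifying that the prescribed step size keeps the drift coefficient $\rho$ strictly below $1$ (indeed $\le\tfrac12$), which is exactly the role played by the hypothesis $\frac{16\tilde{L}^2 m\nu\alpha^2}{r}\le\frac{1}{\nu-1}$.
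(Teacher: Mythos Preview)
Your approach is valid but genuinely different from the paper's. The paper derives a \emph{one-step recursion}: starting from
\[
\mathbb{E}\|B^{t+1}-B^0\|_F^2=\mathbb{E}\Big\|B^t-\tfrac{\alpha}{r}\nabla g^\epsilon(B^t)-B^0\Big\|_F^2+\tfrac{\alpha^2}{r^2}\mathbb{E}\|\hat{\nabla}G(B^t;\xi^t)-\nabla g^\epsilon(B^t)\|_F^2,
\]
it applies Young's inequality with parameter $\tfrac{1}{\nu-1}$ to the first term, bounds the pieces via Lemmas~\ref{lem-obj-property}--\ref{lem-gaussian-smoothing}, and uses the step-size hypothesis to absorb the self-referential contribution $\tfrac{16\tilde L^2 m\nu\alpha^2}{r}a_t$ into the Young factor, obtaining $a_{t+1}\le(1+\tfrac{2}{\nu-1})a_t+C'$. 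Unrolling the geometric series with $\sum_{s<t}(1+\tfrac{2}{\nu-1})^s\le 4\nu$ then gives the stated constants directly. Your route---writing $\Delta^t$ as a full sum, splitting into drift $\sum\nabla g^\epsilon(B^i)$ plus martingale noise $\sum\zeta^i$, and closing a max-type inequality $a_t\le\rho\max_s a_s+C$---rests on the same lemmas and correctly isolates why the variance term carries $\nu$ rather than $\nu^2$ (via martingale orthogonality, where the paper gets it from having only a single $\zeta^t$ per recursion step).

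One caveat on constants: with the split $\|a+b\|_F^2\le 2\|a\|_F^2+2\|b\|_F^2$ and the bounds you list, the signal part of $\rho$ is indeed $\le\tfrac12$ under the hypothesis, but the noise part contributes an additional $\tfrac{24m\tilde L^2(\nu-1)\alpha^2}{r}\le\tfrac{3}{2\nu}$, so $\rho\le\tfrac12+\tfrac{3}{2\nu}$ rather than $\tfrac12$. This still closes for moderate $\nu$ but yields somewhat larger constants than $64,24,264$; the paper's recursion is tailored so that the step-size condition is used \emph{exactly once} to match the Young parameter, which is why its constants come out cleaner. This is bookkeeping, not a structural gap.
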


\begin{proof}
To simplify the notation, we use \( \mathbb{E} \) to denote the expectation taken over both \( U \) and \( \xi \). By the triangle inequality, we have the following:
\begin{align*}
\mathbb{E}\|B^{t+1} - B^0\|_F^2 &= \mathbb{E}\left\|B^t - \frac{\alpha}{r} \hat{\nabla} G(B^t;\xi^t) - B^0\right\|_F^2 \\
&= \mathbb{E}\left\|B^t - \frac{\alpha}{r} \nabla g^\epsilon(B^t) - B^0\right\|_F^2 + \frac{\alpha^2}{r^2} \EE\|\hat{\nabla} G(B^t;\xi^t) - \nabla g^\epsilon(B^t)\|_F^2 \\
&\leq \left(1 + \frac{1}{\nu-1}\right) \mathbb{E}\|B^t - B^0\|_F^2 + \frac{\nu\alpha^2}{r^2} \mathbb{E}\|\nabla g^\epsilon(B^\nu)\|_F^2 + \frac{\alpha^2}{r^2} \mathbb{E}\|\hat{\nabla} G(B^t;\xi^t)\|_F^2.
\end{align*}
Next, we use Lemmas \ref{lem-estimator} and \ref{lem-gaussian-smoothing} to bound the last two terms:
\begin{align*}
\mathbb{E}\|B^{t+1} - B^0\|_F^2 &\leq \left(1 + \frac{1}{\nu-1}\right) \mathbb{E}\|B^t - B^0\|_F^2 + \frac{2\nu\alpha^2}{r^2}\mathbb{E}\|\nabla g(B^t)\|_F^2\\
&\quad + \frac{6m\alpha^2}{r} \mathbb{E}\|\nabla G(B^t;\xi^t)\|_F^2 + 66\tilde{L}^2m^3r\epsilon^2\nu\alpha^2\\
&\leq \left(1 + \frac{1}{\nu-1}\right) \mathbb{E}\|B^t - B^0\|_F^2 + \frac{8m\nu\alpha^2}{r} \mathbb{E}\|\nabla g(B^t)\|_F^2 \\
&\quad + \frac{6m\alpha^2\tilde \sigma^2}{r} + 66\tilde{L}^2m^3r\epsilon^2\nu\alpha^2\\
&\leq \left(1 + \frac{1}{\nu-1} + \frac{16\tilde L^2m\nu\alpha^2}{r}\right) \mathbb{E}\|B^t - B^0\|_F^2 + \frac{16m\nu\alpha^2}{r} \mathbb{E}\|\nabla g(B^0)\|_F^2 \\
&\quad + \frac{6m\alpha^2\tilde \sigma^2}{r} + 66\tilde{L}^2m^3r\epsilon^2\nu\alpha^2.
\end{align*}
Applying the step size condition \( \frac{16\tilde{L}^2m\nu\alpha^2}{r} \leq \frac{1}{\nu-1} \), we obtain:
\begin{align*}
\mathbb{E}\|B^{t+1} - B^0\|_F^2 &\leq \left(1 + \frac{2}{\nu-1}\right) \mathbb{E}\|B^t - B^0\|_F^2 + \frac{16m\nu\alpha^2}{r} \mathbb{E}\|\nabla g(B^0)\|_F^2 \\
&\quad + \frac{6m\alpha^2\tilde \sigma^2}{r} + 66\tilde{L}^2m^3r\epsilon^2\nu\alpha^2.
\end{align*}
By induction, we have:
\begin{align*}
\mathbb{E}\|B^t - B^0\|_F^2 &\leq \sum_{s=0}^{t-1} \left(1 + \frac{2}{\nu-1}\right)^s \left(\frac{16m\nu\alpha^2}{r} \mathbb{E}\|\nabla g(B^0)\|_F^2 + \frac{6m\alpha^2\tilde \sigma^2}{r} + 66\tilde{L}^2m^3r\epsilon^2\nu\alpha^2\right)\\
&\leq \frac{64m\nu^2\alpha^2}{r} \mathbb{E}\|\nabla g(B^0)\|_F^2 + \frac{24m\nu\alpha^2\tilde \sigma^2}{r} + 264\tilde{L}^2m^3r\epsilon^2\nu^2\alpha^2.
\end{align*}
In the last inequality, we use the fact that \( \sum_{s=0}^{t-1}\left(1 + \frac{2}{\nu-1}\right)^s \leq 4\nu \) for \( t \leq \nu \).
\end{proof}

The following lemma provides a bound on the function value of (\ref{prob-opt-B}) over \( \nu \) iteration steps of the update rule (\ref{eqa-zo-update-simple-B}).

\begin{lemma}\label{lem-local-update}
If the step size satisfies the condition \( 32\tilde{L}m\nu\alpha \leq 1 \), then the following holds:
\begin{align*}
\mathbb{E}_{U,\xi}(g_{X,V}(B^\nu) - g_{X,V}(B^0)) &\leq \left(-\frac{\nu\alpha}{4r} + \frac{18\tilde{L}m\nu^2\alpha^2}{r}\right)\mathbb{E}_{U,\xi}\|\nabla g_{X,V}(B^0)\|_F^2 \\
&\quad + \frac{7\tilde{L}m\tilde{\sigma}^2\nu\alpha^2}{r} + 2\tilde Lmr\eps^2,
\end{align*}
where \( \tilde{L} \) and \( \tilde{\sigma}^2 \) are defined in Lemma \ref{lem-obj-property}.
\end{lemma}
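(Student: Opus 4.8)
The plan is to run a one-step descent analysis on the Gaussian-smoothed surrogate $g_{X,V}^\epsilon$, whose gradient the estimator $\hat\nabla G_{X,V}$ is unbiased for by Lemma~\ref{lem-gaussian-smoothing}, and then telescope over the $\nu$ inner steps before converting everything back to the true objective $g_{X,V}$. Throughout I fix $X$ and $V$, drop the subscripts, and write $B^s$ for the inner iterates generated by (\ref{eqa-zo-update-simple-B}). Applying the $\tilde L$-smoothness of $g^\epsilon$ to the update $B^{s+1} = B^s - \tfrac{\alpha}{r}\hat\nabla G(B^s;\xi^s)$, taking conditional expectation over $(U^s,\xi^s)$, and using $\mathbb{E}[\hat\nabla G(B^s;\xi^s)] = \nabla g^\epsilon(B^s)$ gives the per-step bound
\[
\mathbb{E}[g^\epsilon(B^{s+1})] \le g^\epsilon(B^s) - \frac{\alpha}{r}\|\nabla g^\epsilon(B^s)\|_F^2 + \frac{\tilde L\alpha^2}{2r^2}\mathbb{E}\|\hat\nabla G(B^s;\xi^s)\|_F^2.
\]

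Next I would control the second-moment term: Lemma~\ref{lem-estimator} bounds it by $6mr\|\nabla G\|_F^2 + 64\tilde L^2 m^3 r^3\epsilon^2$, and the variance bound of Lemma~\ref{lem-obj-property} converts $\mathbb{E}_\xi\|\nabla G\|_F^2 \le \|\nabla g\|_F^2 + \tilde\sigma^2$. To replace the smoothed gradient by the true one I would combine $\|\nabla g^\epsilon - \nabla g\|_F^2 \le \tilde L^2 mr\epsilon^2$ (Lemma~\ref{lem-gaussian-smoothing}) with $\|\nabla g^\epsilon\|_F^2 \ge \tfrac12\|\nabla g\|_F^2 - \|\nabla g^\epsilon - \nabla g\|_F^2$. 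After these substitutions the per-step inequality takes the form $\mathbb{E}[g^\epsilon(B^{s+1})] - g^\epsilon(B^s) \le \bigl(-\tfrac{\alpha}{2r}+\tfrac{3\tilde Lm\alpha^2}{r}\bigr)\|\nabla g(B^s)\|_F^2$ plus noise and $\epsilon^2$ remainders.

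The key coupling step is that this descent term is anchored at the running iterate $B^s$, whereas the statement is anchored at $B^0$. Since the coefficient $-\tfrac{\alpha}{2r}+\tfrac{3\tilde Lm\alpha^2}{r}$ is negative under the step-size condition, I would lower-bound $\|\nabla g(B^s)\|_F^2 \ge \tfrac12\|\nabla g(B^0)\|_F^2 - \tilde L^2\|B^s - B^0\|_F^2$ via $\tilde L$-smoothness of $g$, so that multiplying by the negative coefficient produces $\tfrac12\|\nabla g(B^0)\|_F^2$ together with a nonnegative drift term proportional to $\|B^s - B^0\|_F^2$. Summing over $s=0,\dots,\nu-1$ telescopes the left side to $\mathbb{E}[g^\epsilon(B^\nu)] - g^\epsilon(B^0)$, and I would invoke Lemma~\ref{lem-e} to bound $\sum_s \mathbb{E}\|B^s - B^0\|_F^2$: its $\|\nabla g(B^0)\|_F^2$ contribution feeds the $O(\nu^2\alpha^2)$ term, its $\tilde\sigma^2$ contribution feeds the noise term, and its $\epsilon^2$ contribution is absorbed. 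Converting $g^\epsilon$ back to $g$ at both endpoints via $|g^\epsilon - g| \le \tfrac{\tilde Lmr\epsilon^2}{2}$ costs an extra $\tilde Lmr\epsilon^2$, yielding the $2\tilde Lmr\epsilon^2$ term, while the leading coefficient $-\tfrac{\nu\alpha}{4r}$ emerges as $\tfrac{\nu}{2}\cdot\bigl(-\tfrac{\alpha}{2r}\bigr)$.

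The main obstacle is the constant bookkeeping: I must verify that the step-size condition $32\tilde Lm\nu\alpha\le1$ renders every higher-order-in-$\alpha$ contribution — the $\tfrac{3\tilde Lm\alpha^2}{r}$ piece and all drift terms fed through Lemma~\ref{lem-e}, which carry extra factors of $\nu\alpha$ and $\tilde Lm$ — small enough to be absorbed into the stated coefficients $\tfrac{18\tilde Lm\nu^2\alpha^2}{r}$ and $\tfrac{7\tilde Lm\tilde\sigma^2\nu\alpha^2}{r}$ without spoiling the leading $-\tfrac{\nu\alpha}{4r}$ descent. Carefully tracking the sign of the negative coefficient through the lower bound on $\|\nabla g(B^s)\|_F^2$ is the one place where a careless inequality would flip the direction, so that deserves the most attention.
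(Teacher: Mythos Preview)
Your plan is correct, and all the bookkeeping you worry about does go through under $32\tilde Lm\nu\alpha\le 1$ (this condition also implies the hypothesis of Lemma~\ref{lem-e}, so that invocation is legitimate). However, your route differs from the paper's in one structural respect that is worth noting.

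You run a \emph{per-step} descent analysis: apply $\tilde L$-smoothness of $g^\epsilon$ across each update $B^s\to B^{s+1}$, bound the second moment of $\hat\nabla G(B^s;\xi^s)$ individually via Lemma~\ref{lem-estimator}, anchor $\|\nabla g(B^s)\|_F^2$ back at $B^0$ through the lower bound, and telescope. The paper instead applies the smoothness of $g^\epsilon$ \emph{once}, directly from $B^0$ to $B^\nu$, writing $B^\nu-B^0=-\tfrac{\alpha}{r}\sum_{t}\hat\nabla G(B^t;\xi^t)$; the inner-product term then naturally produces $-\tfrac{\nu\alpha}{r}\|\nabla g^\epsilon(B^0)\|_F^2$ plus a drift $\langle\nabla g^\epsilon(B^0),\nabla g^\epsilon(B^t)-\nabla g^\epsilon(B^0)\rangle$ handled by Young's inequality and Lipschitzness, while the quadratic term $\|\sum_t\hat\nabla G(B^t;\xi^t)\|_F^2$ is split into a martingale-variance part (using independence across $t$) and a drift part. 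What each approach buys: the paper's one-shot smoothness step already centers the descent at $B^0$, so no sign-sensitive lower bound on $\|\nabla g(B^s)\|_F^2$ is needed, but it must handle the cross-sum second moment carefully; your per-step version avoids the cross-sum entirely at the cost of the anchoring step you correctly flag as delicate. Both feed the same Lemma~\ref{lem-e} bound on $\sum_t\|B^t-B^0\|_F^2$ and land inside the stated constants.
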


\begin{proof}
We continue to use \( \mathbb{E} \) to denote the expectation taken over both \( U \) and \( \xi \). We begin with the following inequality, which is derived from the \( \tilde{L} \)-smoothness of \( g^\epsilon(B) \):
\begin{align*}
\mathbb{E} g^\epsilon(B^\nu) - \mathbb{E} g^\epsilon(B^0) &\leq \mathbb{E} \langle \nabla g^\epsilon(B^0), B^\nu - B^0 \rangle + \frac{\tilde{L}}{2} \mathbb{E}\|B^\nu - B^0\|_F^2 \\
&\le -\frac{\alpha}{r} \sum_{t=0}^{\nu-1}\mathbb{E} \langle \nabla g^\epsilon(B^0), \hat{\nabla} G(B^t; \xi^t) \rangle + \frac{\tilde{L}\alpha^2}{2r^2}\mathbb{E}\Big\|\sum\limits_{t=0}^{\nu-1}\hat{\nabla} G(B^t; \xi^t)\Big\|_F^2.
\end{align*}
For the first term on the right-hand side, we have:
\begin{align*}
-\frac{\alpha}{r} \sum_{t=0}^{\nu-1} \mathbb{E} \langle \nabla g^\epsilon(B^0), &\hat{\nabla} G(B^t; \xi^t) \rangle = -\frac{\alpha}{r} \sum_{t=0}^{\nu-1} \mathbb{E} \langle \nabla g^\epsilon(B^0), \nabla g^\epsilon(B^t) \rangle \\
&= -\frac{\alpha}{r} \sum_{t=0}^{\nu-1} \mathbb{E} \langle \nabla g^\epsilon(B^0), \nabla g^\epsilon(B^t) - \nabla g^\epsilon(B^0) + \nabla g^\epsilon(B^0) \rangle \\
&= -\frac{\nu\alpha}{r} \|\nabla g^\epsilon(B^0)\|_F^2 - \frac{\alpha}{r} \sum_{t=0}^{\nu-1} \mathbb{E} \langle \nabla g^\epsilon(B^0), \nabla g^\epsilon(B^t) - \nabla g^\epsilon(B^0) \rangle \\
&\leq -\frac{\nu\alpha}{2r} \mathbb{E} \|\nabla g^\epsilon(B^0)\|_F^2 + \frac{\alpha}{2r} \sum_{t=0}^{\nu-1} \mathbb{E} \|\nabla g^\epsilon(B^t) - \nabla g^\epsilon(B^0)\|_F^2 \\
&\leq -\frac{\nu\alpha}{4r} \mathbb{E} \|\nabla g(B^0)\|_F^2 + \frac{\tilde{L}^2\alpha}{2r} \sum_{t=0}^{\nu-1} \mathbb{E} \|B^t - B^0\|_F^2 + \frac{\tilde L^2m\eps^2\nu\alpha}{2}.
\end{align*}
The first inequality follows from \( \langle a,b \rangle \le \frac{\|a\|^2 + \|b\|^2}{2} \), and the final inequality holds due to Lemmas \ref{lem-obj-property} and \ref{lem-gaussian-smoothing}.

For the second term, we have:
\begin{align*}
\frac{\tilde{L}\alpha^2}{2r^2}\mathbb{E} \Big\|\sum_{t=0}^{\nu-1}&\hat{\nabla} G(B^t; \xi^t)\Big\|_F^2 
\le \frac{\tilde{L}\alpha^2}{r^2}\mathbb{E} \Big\|\sum_{t=0}^{\nu-1}\hat{\nabla} G(B^t; \xi^t) - \nabla g^\eps(B^t)\Big\|_F^2 + \frac{\tilde{L}\nu\alpha^2}{r^2}\sum_{t=0}^{\nu-1}\EE\|\nabla g^\eps(B^t)\|_F^2 \\
&= \frac{\tilde{L}\alpha^2}{r^2}\sum_{t=0}^{\nu-1}\mathbb{E} \Big\|\hat{\nabla} G(B^t; \xi^t) - \nabla g^\eps(B^t)\Big\|_F^2 + \frac{\tilde{L}\nu\alpha^2}{r^2}\sum_{t=0}^{\nu-1}\EE\|\nabla g^\eps(B^t)\|_F^2 \\
&\le \frac{\tilde L\alpha^2}{r^2}\sum_{t=0}^{\nu-1}\EE\|\hat \nabla G(B^t,\xi^t)\|_F^2 + \frac{\tilde{L}\nu\alpha^2}{r^2}\sum_{t=0}^{\nu-1}\EE\|\nabla g^\eps(B^t)\|_F^2\\
&\le \frac{6\tilde{L}m\alpha^2}{r} \sum_{t=0}^{\nu-1} \mathbb{E} \|\nabla G(B^t; \xi^t)\|_F^2 + 64\tilde{L}^3m^3r\epsilon^2\nu\alpha^2 + \frac{\tilde{L}\nu\alpha^2}{r^2}\sum_{t=0}^{\nu-1}\EE\|\nabla g^\eps(B^t)\|_F^2\\
&\leq \frac{8\tilde{L}m\nu\alpha^2}{r} \sum_{t=0}^{\nu-1} \mathbb{E} \|\nabla g(B^t)\|_F^2 + \frac{6\tilde{L}m\tilde{\sigma}^2\nu\alpha^2}{r} + 66\tilde{L}^3m^3r\epsilon^2\nu^2\alpha^2 \\
&\leq \frac{16\tilde{L}m\nu^2\alpha^2}{r} \mathbb{E} \|\nabla g(B^0)\|_F^2 + \frac{16\tilde{L}^3m\nu\alpha^2}{r} \sum_{t=0}^{\nu-1} \mathbb{E} \|B^t - B^0\|_F^2 \\
&\quad + \frac{6\tilde{L}m\tilde{\sigma}^2\nu\alpha^2}{r} + 66\tilde{L}^3m^3r\epsilon^2\nu^2\alpha^2.
\end{align*}
The first equation holds due to the independence of \( U^t \) and \( \xi^t \) for each \( t \), while the third and fourth inequalities follow from Lemmas \ref{lem-estimator} and \ref{lem-gaussian-smoothing}, respectively. Combining the above results and considering the condition \( 32\tilde{L}m\nu\alpha \leq 1 \), we obtain:
\begin{align*}
\mathbb{E}(g^\epsilon(B^\nu) - g^\epsilon(B^0)) &\leq \left(-\frac{\nu\alpha}{4r} + \frac{16\tilde{L}m\nu^2\alpha^2}{r}\right)\mathbb{E} \|\nabla g(B^0)\|_F^2 + \frac{\tilde{L}^2\alpha}{r} \sum_{t=0}^{\nu-1} \mathbb{E} \|B^t - B^0\|_F^2 \\
&\quad + \frac{6\tilde{L}m\tilde{\sigma}^2\nu\alpha^2}{r} + 66\tilde{L}^3m^3r\epsilon^2\nu^2\alpha^2 + \frac{\tilde L^2m\eps^2\nu\alpha}{2}.
\end{align*}
Applying Lemma \ref{lem-e} and again considering the condition \( 32\tilde{L}m\nu\alpha \leq 1 \), we have:
\begin{align*}
\mathbb{E}(g^\epsilon(B^\nu) - g^\epsilon(B^0)) &\leq \left(-\frac{\nu\alpha}{4r} + \frac{18\tilde{L}m\nu^2\alpha^2}{r}\right)\mathbb{E} \|\nabla g(B^0)\|_F^2 + \frac{7\tilde{L}m\tilde{\sigma}^2\nu\alpha^2}{r} + \tilde{L}mr\epsilon^2.
\end{align*}
Finally, by applying Lemma \ref{lem-gaussian-smoothing} once again, we can complete our proof.
\end{proof}

Now we have established the bound for solving the subproblem (\ref{prob-opt-B}). Next, we investigate the impact of updating $X$ and resampling \( V \) and establish the convergence result for our proposed LOZO algorithm. This leads to the following theorem.

\begin{theorem}[Theorem \ref{thm-zo-lo}]
Under Assumptions \ref{ass-smooth} and \ref{ass-orthogonal}, and assuming the step size \( \alpha \le \frac{1}{144Lmn\nu} \), when applying the proposed LOZO algorithm to solve problem (\ref{prob-opt}), and letting \( T = K\nu \), the following inequality holds:
\begin{align*}
\frac{1}{K}\sum_{k=0}^{K-1} \mathbb{E}\|\nabla f(X^{k\nu})\|^2 &\le \frac{8\Delta_0}{T\alpha} + \frac{56Lmn^2\sigma^2\alpha}{r} + \frac{16Lmnr\epsilon^2}{\nu\alpha},
\end{align*}
where \( \Delta_0 := f(X^0) - f^* \). Furthermore, if we choose 
\begin{align*}
\epsilon = \sqrt{\frac{\Delta_0\nu}{16TLmnr}}, \quad \alpha = \left(144Lmn\nu + \sqrt{\frac{56TLmn^2\sigma^2}{9\Delta_0r}}\right)^{-1},
\end{align*}
then it holds that:
\begin{align*}
\frac{1}{K}\sum_{k=0}^{K-1} \mathbb{E}\|\nabla f(X^{k\nu})\|^2 &\le 16\sqrt{\frac{1}{T}\left(\frac{7\Delta_0Lmn^2\sigma^2}{r}\right)} + \frac{2592\Delta_0Lmn\nu}{T}.
\end{align*}
\end{theorem}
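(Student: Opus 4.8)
The plan is to reduce the theorem to the single per-period descent estimate already packaged in Lemma~\ref{lem-local-update} and then telescope. First I would invoke the equivalence from Section~\ref{sec-illustration} (proved in Appendix~\ref{app-equivalence}): the LOZO iterate $X^{k\nu}$ equals the outer iterate $\tilde{X}^{(k)}$ of the ZO subspace method (\ref{eqa-zo-update-simple}). Because the inner loop is initialized at $B^{(k,0)} = 0$, the endpoints of the inner run acquire a clean meaning through $g_{X,V}(B) = f(X + BV^T)$: namely $g_{X^{k\nu}, V^{(k)}}(B^{(k,0)}) = f(X^{k\nu})$ and $g_{X^{k\nu}, V^{(k)}}(B^{(k,\nu)}) = f(X^{(k+1)\nu})$. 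Thus Lemma~\ref{lem-local-update}, applied with $X = X^{k\nu}$, $V = V^{(k)}$, and $B^0 = 0$, becomes a descent inequality directly on the objective $f$ between consecutive period endpoints.

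The crux is converting the subspace gradient $\nabla g_{X,V}(0)$ appearing in Lemma~\ref{lem-local-update} into the true gradient $\nabla f$. Here I would use the identity $\nabla g_{X,V}(B) = \nabla f(X + BV^T)V$ (derived in the proof of Lemma~\ref{lem-obj-property}), so that $\nabla g_{X^{k\nu}, V^{(k)}}(0) = \nabla f(X^{k\nu}) V^{(k)}$. Taking the expectation over the fresh sample $V^{(k)}$ and invoking Assumption~\ref{ass-orthogonal}, which gives $\mathbb{E}[V^{(k)}(V^{(k)})^T] = r I$, yields $\mathbb{E}_{V}\|\nabla f(X^{k\nu})V^{(k)}\|_F^2 = r\|\nabla f(X^{k\nu})\|_F^2$. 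This is the step where the rank $r$ enters and where the low-rank probing is shown to carry full-gradient information in expectation. Care is needed with the conditioning, since Lemma~\ref{lem-local-update} is stated for fixed $X,V$ with expectation only over the inner randomness $(U,\xi)$: I would first condition on the filtration that determines $X^{k\nu}$, apply the lemma conditionally on $V^{(k)}$, then average over $V^{(k)}$ and the history via the tower property. I expect this conditioning/averaging to be the genuinely delicate part of the argument.

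Next I would simplify constants by substituting $\tilde{L} = nL$ and $\tilde{\sigma}^2 = n\sigma^2$ from Lemma~\ref{lem-obj-property}, and use the step-size hypothesis $\alpha \le (144 Lmn\nu)^{-1}$ to dominate the positive $O(\alpha^2)$ curvature term by the leading negative $O(\alpha)$ term. Indeed, after multiplying the coefficient in Lemma~\ref{lem-local-update} by $r$ (from the bridging step), the hypothesis forces $18\tilde{L} m\nu^2\alpha^2 \le \frac{1}{8}\nu\alpha$, so the net coefficient of $\mathbb{E}\|\nabla f(X^{k\nu})\|^2$ is at most $-\frac{\nu\alpha}{8}$. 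Rearranging and summing over $k = 0,\dots,K-1$ telescopes the differences $f(X^{k\nu}) - f(X^{(k+1)\nu})$, and bounding $f(X^{K\nu}) \ge f^*$ (so the sum is at most $\Delta_0$), then dividing by $\frac{K\nu\alpha}{8} = \frac{T\alpha}{8}$, produces exactly the first displayed inequality $\frac{8\Delta_0}{T\alpha} + \frac{56 Lmn^2\sigma^2\alpha}{r} + \frac{16 Lmnr\epsilon^2}{\nu\alpha}$.

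Finally I would optimize over the free parameters. The choice $\epsilon^2 = \frac{\Delta_0\nu}{16 TLmnr}$ is engineered so that the third term collapses to $\frac{\Delta_0}{T\alpha}$, merging with the first into $\frac{9\Delta_0}{T\alpha}$. The stated step size, of the form $(a+b)^{-1}$ with $a = 144 Lmn\nu$ and $b = \sqrt{56 TLmn^2\sigma^2/(9\Delta_0 r)}$, then balances $\frac{9\Delta_0}{T\alpha} = \frac{9\Delta_0(a+b)}{T}$ against $\frac{56 Lmn^2\sigma^2\alpha}{r}$: the $a$-part of $\alpha^{-1}$ yields the $O(\nu/T)$ contribution, while the $b$-part, together with the crude bound $\alpha \le b^{-1}$ on the noise term, yields two matching $O(1/\sqrt{T})$ contributions. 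Collecting these (up to the explicit constants) gives the advertised rate $16\sqrt{7\Delta_0 Lmn^2\sigma^2/(rT)} + 2592\Delta_0 Lmn\nu/T$. The two final steps — the telescoping bookkeeping and the parameter balancing — are routine once Lemma~\ref{lem-local-update} and the bridging identity are in hand, so the analytical difficulty is concentrated entirely in the averaging over $V^{(k)}$ described above.
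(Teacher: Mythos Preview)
Your proposal is correct and follows essentially the same route as the paper: apply Lemma~\ref{lem-local-update} with the identification $g_{X^{k\nu},V^{(k)}}(B^0)=f(X^{k\nu})$, $g_{X^{k\nu},V^{(k)}}(B^\nu)=f(X^{(k+1)\nu})$ and $\nabla g_{X^{k\nu},V^{(k)}}(0)=\nabla f(X^{k\nu})V^{(k)}$, then average over $V^{(k)}$ via $\mathbb{E}[V^{(k)}(V^{(k)})^T]=rI$, use the step-size bound to absorb the $O(\alpha^2)$ term, telescope, and optimize the parameters. Your treatment of the parameter balancing is in fact more explicit than the paper's, which simply states the choices of $\epsilon$ and $\alpha$ and the resulting bound.
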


\begin{proof}
 Recalling the update rule (\ref{eqa-zo-update-simple}), it follows that 
 \[ g_{X^{k\nu}, V^{(k)}}(B^\nu) = f(X^{k\nu} + B^{(k,\nu)}(V^{(k)})^T) = f(X^{(k+1)\nu}),\quad  g_{X^{k\nu}, V^{(k)}}(B^0) = f(X^{k\nu}). \] 
 Moreover, note that \( \nabla g_{X^{k\nu}, V^{(k)}}(B^0) = \nabla f(X^{k\nu}) V^{(k)} \). By applying Lemma \ref{lem-local-update}, we obtain:
\begin{align*}
\mathbb{E}_{U,\xi}(f(X^{(k+1)\nu}) - f(X^{k\nu})) &\leq \left(-\frac{\nu\alpha}{4r} + \frac{18\tilde{L}m\nu^2\alpha^2}{r}\right)\mathbb{E}_{U,\xi} \|\nabla f(X^{k\nu}) V^{(k)}\|_F^2 \\
&\quad + \frac{7\tilde{L}m\tilde{\sigma}^2\nu\alpha^2}{r} + 2\tilde{L}mr\epsilon^2.
\end{align*}
Taking the expectation over \( V^{(k)} \), and noting that \( \mathbb{E} V^{(k)}(V^{(k)})^T = I \) (by Assumption \ref{ass-orthogonal}), \( V^{(k)} \) is independent of \( X^{k\nu} \), we have:
\begin{align*}
\mathbb{E}(f(X^{(k+1)\nu}) - f(X^{k\nu})) &\leq \left(-\frac{\nu\alpha}{4} + 18\tilde{L}m\nu^2\alpha^2\right)\mathbb{E} \|\nabla f(X^{k\nu})\|_F^2 \\
&\quad + \frac{7\tilde{L}m\tilde{\sigma}^2\nu\alpha^2}{r} + 2\tilde{L}mr\epsilon^2.
\end{align*}
Note that \( T = K\nu \). Rearranging the inequality above and summing over \( K \) gives:
\begin{align*}
\left(\frac{1}{4} - 18\tilde{L}m\nu\alpha\right)\frac{1}{K}\sum_{k=1}^{K} \mathbb{E} \|\nabla f(X^{k\nu})\|_F^2 &\leq \frac{\Delta_0}{T\alpha} + \frac{7\tilde{L}m\tilde{\sigma}^2\alpha}{r} + \frac{2\tilde{L}mr\epsilon^2}{\nu\alpha}.
\end{align*}
Considering the step size condition \( 144\tilde{L}m\nu\alpha \leq 1 \), and using \( \tilde{L} = nL \) and \( \tilde{\sigma}^2 = n\sigma^2 \), we complete the proof.
\end{proof}

\section{Experimental Details}
\label{appen-exp}
\subsection{Datasets}
\label{appen-data}
For RoBERTa-large, we evaluate the performance on six NLP tasks:
SST-2 \citep{socher2013recursive}, SST-5 \citep{socher2013recursive}, SNLI \citep{bowman2015large}, MNLI \citep{williams2017broad}, RTE \citep{dagan2005pascal, bar2006second, giampiccolo2007third, bentivogli2009fifth}, TREC \citep{voorhees2000building}. We adopt two settings: $k=16$ and $k=512$, which require 16 and 512 examples per class, respectively, during both the training and validation stages.

\begin{table*}[h]
\centering
\scriptsize 
\begin{tabular}{lrc}
\toprule
Experiment & Hyperparameters & Values \\
\midrule
LOZO & Batch size & $64$ \\
& Learning rate (k=16) & $ 1\mathrm{e}{-6}$ \\
& Learning rate (k=512) & $ 2\mathrm{e}{-7}$ \\
& Rank ($r$) & $\{4, 8 \}$ \\
& Interval ($\nu$) & $\{50, 100\}$ \\
& $\epsilon$ & $1\mathrm{e}{-3}$ \\
& Weight Decay & $0$ \\
\midrule\midrule
MeZO & Batch size & $64$ \\
& Learning rate & $\{1\mathrm{e}{-7}, 1\mathrm{e}{-6}, 1\mathrm{e}{-5} \}$ \\
& $\epsilon$ & $1\mathrm{e}{-3}$ \\
& Weight Decay & $0$ \\
\midrule
MeZO-LoRA & Batch size & $64$ \\
& Learning rate & $\{1\mathrm{e}{-5}, 5\mathrm{e}{-5}, 1\mathrm{e}{-4} \}$ \\
& $\epsilon$ & $1\mathrm{e}{-3}$ \\
& Weight Decay & $0.1$ \\
& $(r, \alpha)$ & $(8, 16)$ \\
\midrule\midrule
FT & Batch size & $\{8\}$ \\
& Learning rate &  $\{1\mathrm{e}{-5}, 3\mathrm{e}{-5}, 5\mathrm{e}{-5} \}$ \\
& Weight Decay & $0$\\
\midrule
FT-LoRA & Batch size & $\{8\}$ \\
& Learning rate & $\{1\mathrm{e}{-4}, 3\mathrm{e}{-4}, 5\mathrm{e}{-4}\}$\\ 
& $(r, \alpha)$ & $(8, 16)$ \\
\bottomrule
\end{tabular}
\caption{The hyperparameter grids used for RoBERTa-large experiments. The learning rate of the LOZO algorithm refers to $\alpha/r$. For LOZO-M, we introduce an additional parameter, \(\beta_1\), which is searched over the range \{0.5, 0.7, 0.9\}.}
\label{tab:roberta_hyper}
\end{table*}

For LLMs, we conduct experiments on the following datasets: SST-2 \citep{socher2013recursive}, RTE \citep{dagan2005pascal, bar2006second, giampiccolo2007third, bentivogli2009fifth}, CB \citep{de2019commitmentbank}, BoolQ\citep{clark2019boolq}, WSC \citep{levesque2012winograd}, WIC \citep{pilehvar2018wic}, MultiRC \citep{khashabi2018looking}, COPA \citep{roemmele2011choice}, ReCoRD \citep{zhang2018record}, SQuAD \citep{rajpurkar2016squad}, DROP \citep{dua2019drop}.

\subsection{Hyperparameters}
\label{appen-hyper}
In this section, we present the hyperparameter search grids to facilitate the reproduction of our results with the RoBERTa-large and OPT large language models, detailed in Tables \ref{tab:roberta_hyper} and \ref{tab:opt_hyper}, respectively. MeZO and LOZO employ a constant learning rate schedule, while FT and FT-LoRA utilize a linear scheduling approach. We conduct 100,000 steps for MeZO, LOZO, and their variants on the RoBERTa-large model, and 20,000 steps on the OPT model. FT and FT-LoRA are executed for one-tenth the number of steps as LOZO. The optimal checkpoint is saved at every fifth of the total training steps.

\begin{table*}[h]
    \centering
    \scriptsize
    \begin{tabular}{lrc}
    \toprule
    Experiment & Hyperparameters & Values \\
    \midrule
    LOZO & Batch size & $16$ \\
    & Learning rate & $\{1\mathrm{e}{-6}, 1\mathrm{e}{-7}\}$ \\
    & $\epsilon$ & $\{1\mathrm{e}{-3}, 1\mathrm{e}{-4}$\} \\
    & Rank ($r$) & $\{1,2,4\}$ \\
    & Interval ($\nu$) & $\{50, 100\}$ \\
    \midrule\midrule
    MeZO & Batch size & $16$ \\
    & Learning rate & $\{1\mathrm{e}{-6}, 1\mathrm{e}{-7} \}$ or $\{1\mathrm{e}{-6}, 5\mathrm{e}{-7}, 1\mathrm{e}{-7} \}$ for SQuAD and DROP\\
    & $\epsilon$ & $1\mathrm{e}{-3}$ \\
    \midrule
    MeZO-LoRA & Batch size & $16$ \\
    & Learning rate & $\{1\mathrm{e}{-4}, 5\mathrm{e}{-5} \}$  or $\{1\mathrm{e}{-4}, 5\mathrm{e}{-5}, 1\mathrm{e}{-5} \}$ for SQuAD and DROP\\
    & $\epsilon$ & $1\mathrm{e}{-2}$ \\
    & $(r, \alpha)$ & $(8, 16)$ \\
    \midrule\midrule
    FT & Batch size & $8$ \\
    & Learning rate & $\{1\mathrm{e}{-5}, 5\mathrm{e}{-5}, 8\mathrm{e}{-5} \}$\\
    \bottomrule
    \end{tabular}
    \caption{The hyperparameter grids used for OPT experiments. The learning rate of the LOZO algorithm refers to $\alpha/r$.}
    \label{tab:opt_hyper}
\end{table*}

\subsection{Ablation Study}
\label{appen-ablation}
In this section, we explore how the choice of rank \( r \) and the lazy update interval \( \nu \) affect the performance of our algorithm. We begin by examining the impact of \( \nu \) and \( r \) using the SST-2, COPA and RTE datasets on the OPT-1.3b model.  To illustrate the impact of different values of \(r\) and \(\nu\) on the convergence rate, we present a plot of loss versus epochs in Figure \ref{fig:appen-ablation-loss}. Also, we list the accuracy and training loss across different rank $r$ and $\nu$ for the three datasets in Table \ref{tab:appen-ablation}.

\begin{figure}[!htbp]
    \centering
    \begin{minipage}{0.4\textwidth}
        \includegraphics[width=\linewidth]{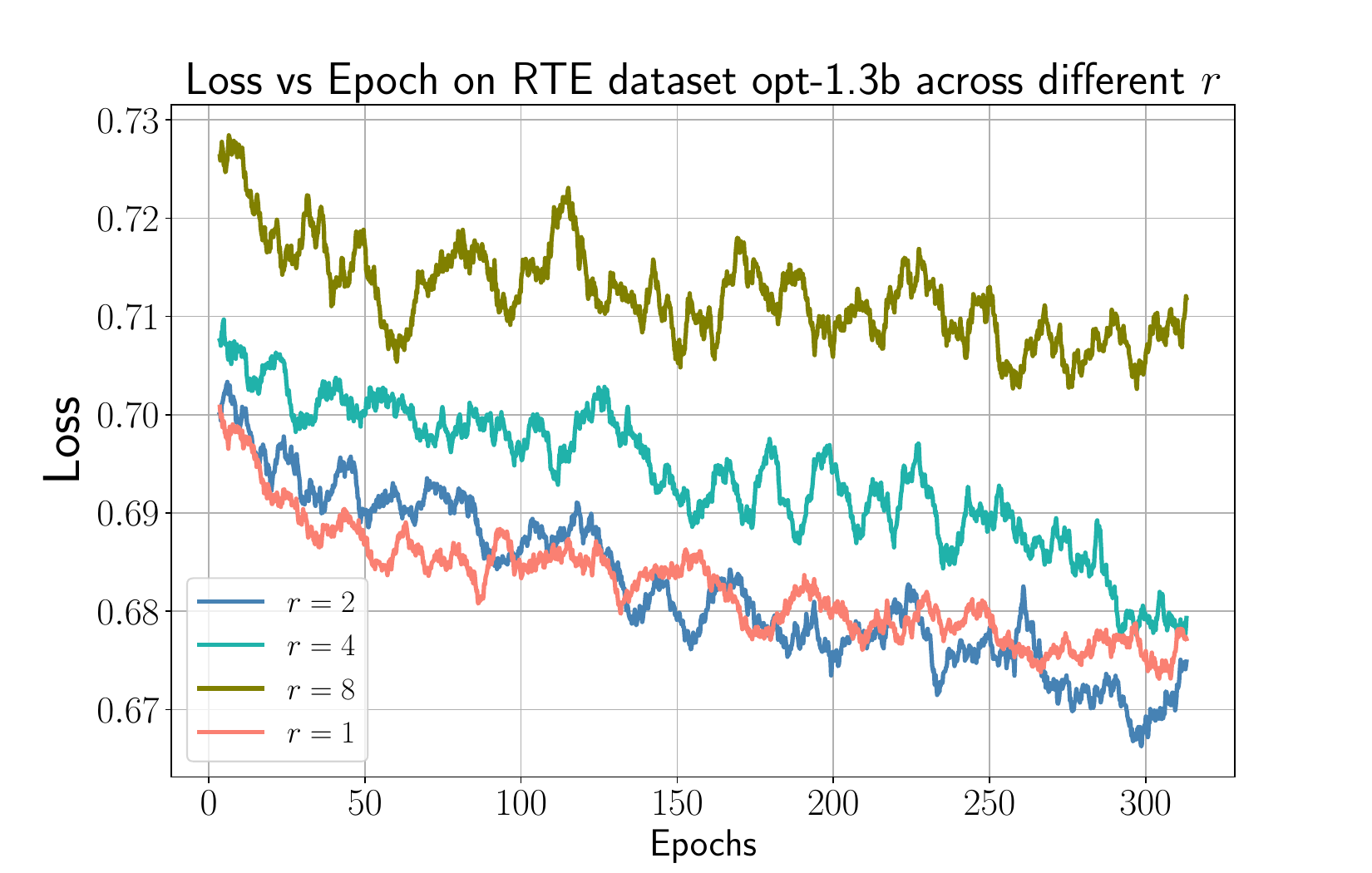}
    \end{minipage}
    \begin{minipage}{0.4\textwidth}
        \centering
        \includegraphics[width=\linewidth]{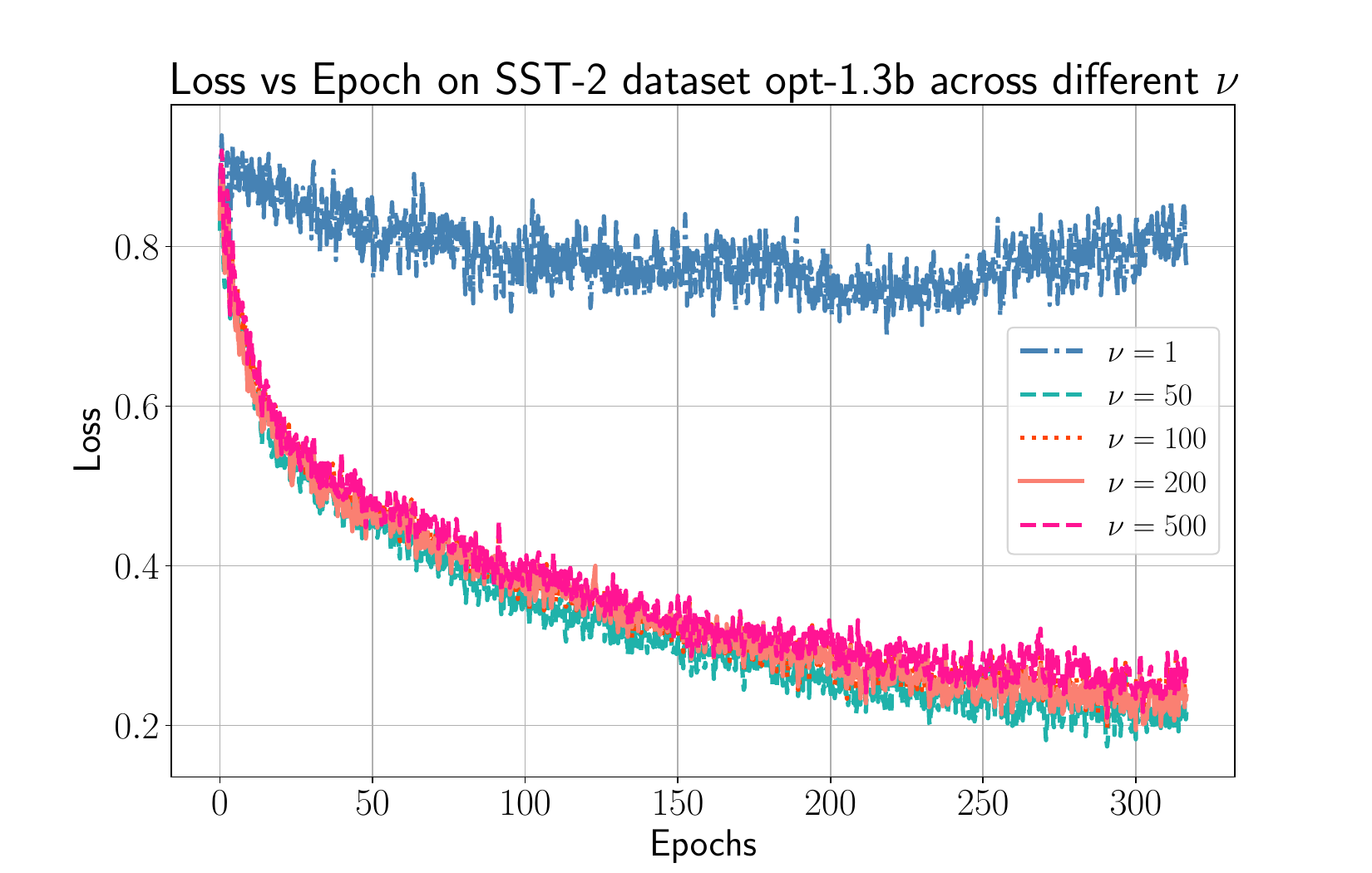}
    \end{minipage}
    \caption{\textit{Left:} Loss curves of OPT-1.3B on RTE dataset across different rank $r$. \textit{Right:} Loss curves of OPT-1.3B on SST-2 dataset across different value $\nu$.}
\label{fig:appen-ablation-loss}
\end{figure}

\textbf{A small \(\nu\) value can negatively impact convergence.} For datasets where the loss exhibits a significant decrease during fine-tuning, very small \( \nu \) values can hinder the model's convergence, leading to degraded performance on the test datasets. For example, with a rank of 2 and \( \nu = 1 \), the final training loss reaches 0.79, nearly double that of other settings, as shown in Table \ref{tab:appen-ablation}. In addition, as shown in Figure \ref{fig:appen-ablation-loss}, \(\nu = 1\) exhibits different training dynamics compared to larger \(\nu\) values, where the training loss either remains unchanged or even increases.

\textbf{A small \(\nu\) value may not affect the model's performance on the test dataset.} In contrast, for datasets where the loss remains stable or decreases only slightly, the performance degradation caused by small \( \nu \) values is minimal and less noticeable. The COPA dataset serves as a typical example, with the loss remaining nearly unchanged during training and unaffected by extremely small \( \nu \) values. As shown in Table \ref{tab:appen-ablation}, the accuracy with a rank of 2 and \( \nu = 1 \) is comparable to that of other settings.

\textbf{A large rank $r$ can moderately slow down the training.} The left panel of Figure \ref{fig:appen-ablation-loss} demonstrates that a larger rank starts with a higher loss, requiring additional training epochs to reach the same loss compared to those small rank $r$.

\textbf{A small rank \( r \) leads to a decline in model performance.} When setting the rank to \( r = 1 \), it consistently results in suboptimal performance across all three tasks.
\begin{table}[h]
\centering
\scriptsize 
\begin{tabular}{ll cc cc cc}
    \toprule
    & & \multicolumn{2}{c}{SST-2} & \multicolumn{2}{c}{COPA} & \multicolumn{2}{c}{RTE} \\ 
    \cmidrule(lr){3-4} \cmidrule(lr){5-6} \cmidrule(lr){7-8}
    $r$ & $\nu$ & Accuracy & loss & Accuracy & loss & Accuracy & loss \\ 
    \midrule
    \multirow{2}{*}{1} & 50 & 88.1 & 0.45 & 73.0 & 1.93 & 56.7 & 0.68\\ 
    & 100 & 89.0 & 0.46 & 74.0 & 2.18 & 56.7 & 0.68\\ 
    \midrule
    \multirow{5}{*}{2} & 1 & 55.0 & 0.79 & 74.0 & 2.58 & 50.9 & 0.70 \\ 
                       & 50 & 93.0 & 0.37 &  74.0 & 2.04 & 61.0 & 0.69 \\ 
                       & 100 & 92.1 & 0.37 & 71.0 & 2.05 & 58.1 & 0.68 \\
                       & 200 & 92.7 & 0.37 & 77.0 & 2.05 & 62.1 & 0.67 \\
                       & 500 & 91.7 & 0.37 & 75.0 & 2.05 & 62.8 & 0.67 \\
    \midrule
    \multirow{2}{*}{4} & 50 & 91.3 & 0.35  & 76.0 & 1.99 & 57.4 & 0.69\\
     & 100 & 92.0 & 0.35 & 75.0 & 1.97 & 57.8 & 0.69\\
    \midrule
    \multirow{2}{*}{8} & 50 & 88.5 & 0.48 & 71.0 & 2.03 & 55.0 & 0.71\\
    & 100 & 88.9 & 0.45 & 73.0 & 2.03 & 56.3 & 0.71\\
    \bottomrule
\end{tabular}
\vspace{2mm}
\caption{Performance and loss across different values of $r$ and $\nu$ on SST-2, COPA and RTE datasets.}
\label{tab:appen-ablation}
\end{table}

\subsection{Experimental Results}
\label{appen-results}
In this section, we present the complete results for RoBERTa-large. As shown in Table \ref{appen-tab-roberta}, our LOZO method and its variant, LOZO-M, outperform other gradient-free methods on almost all datasets. 

\begin{table}[h!]
\centering
\small 
\begin{tabular}{@{}lccccccc@{}}
\toprule\toprule
Task & \textbf{SST-2} & \textbf{SST-5} & \textbf{SNLI} & \textbf{MNLI} & \textbf{RTE} & \textbf{TREC} \\ 
\cmidrule(lr){2-3} \cmidrule(lr){4-6} \cmidrule(lr){7-7}
Type & \multicolumn{2}{c}{\text{sentiment}} & \multicolumn{3}{c}{\text{natural language inference}} & topic \\ 
\midrule
Zero-shot & 79.0 & 35.5 & 50.2 & 48.8 & 51.4 & 32.0 \\ 
\midrule
Gradient-free methods: $k = 16$ \\
\quad MeZO & 86.3 (8.0) & 40.8 (2.5) & 68.5 (4.2) & 56.7 (3.4) & 58.6 (9.5) & 62.4 (10.2) \\ 
\quad MeZO-LoRA & \bf{88.4 (1.6)} & 38.9 (2.0) & 67.0 (3.3) & 56.8 (1.0) & 59.7 (4.5) & 37.0 (5.1) \\ 
\quad LOZO & 88.0 (5.8) & 41.1 (2.8) & 73.4 (4.0) & 61.6 (4.6) & \bf{61.2 (9.1)} & 77.9 (7.4) \\ 
\quad LOZO-M & 88.0 (5.8) & \bf{42.9 (1.5)} & \bf{74.0 (4.0)} & \bf{62.7 (4.5)} & 60.2 (8.4) & \bf{82.2 (5.2)} \\ 
\midrule
Gradient-based methods: $k = 16$ \\
\quad FT & 89.3 (5.3) & 44.0 (1.7) & 72.7 (5.7) & 63.4 (4.3) & 61.3 (5.3) & 83.7 (4.7) \\ 
\quad FT-LoRA & 92.7 (0.9) & 45.5 (1.4) & 71.6 (4.5) & 59.4 (4.6) & 61.4 (6.9) & 75.8 (7.8) \\ 
\midrule
Gradient-free methods: $k = 512$ \\
\quad MeZO & 93.7 (0.4) & \bf{53.9 (1.9)} & 84.8 (1.1) & 76.6 (0.8) & 76.8 (3.1) & 95.0 (0.4) \\ 
\quad MeZO-LoRA & 91.7 (0.2) & 45.1 (1.4) & 73.1 (1.1) & 65.5 (0.9) & 72.7 (0.8) & 50.8 (1.9) \\ 
\quad LOZO & 94.1 (0.7) & 53.0 (0.4) & \bf{85.4 (0.8)} & 80.4 (1.0) & 79.7 (2.0) & \bf{95.5 (0.4)} \\
\quad LOZO-M & \bf{94.3 (0.8)} & 52.6 (0.3) & 84.9 (1.1) & \bf{80.5 (0.7)} & \bf{79.7 (1.6)} & 95.5 (0.5) \\ 
\midrule
Gradient-based methods: $k = 512$ \\
\quad FT & 94.4 (0.6) & 55.7 (1.6) & 88.3 (0.8) & 84.8 (0.7) & 82.7 (1.1) & 97.2 (0.3) \\ 
\quad FT-LoRA & 91.9 (2.1) & 52.4 (1.2) & 84.8 (0.6) & 74.8 (3.4) & 81.2 (1.6) & 96.1 (0.6) \\ 
\bottomrule\bottomrule
\end{tabular}
\vspace{2mm}
\caption{Experimental results on RoBERTa-large (350M). All reported numbers are averaged accuracy
(standard deviation). LOZO and LOZO-M outperforms MeZO and MeZO-LoRA by a considerable margin and approaches FT performance.}
\label{appen-tab-roberta}
\end{table}

\end{document}